\documentclass[12pt,a4paper, oneside]{amsart}

\usepackage{amsmath, nicefrac, amsthm, verbatim, amsfonts, mathtools, amssymb, upgreek, xcolor, bbm}
\usepackage{graphics, xspace, enumerate}
\usepackage{stix}
\usepackage{dsfont}
\usepackage[a4paper,margin=2.5cm]{geometry}
\usepackage[bb=boondox]{mathalfa}

\usepackage{graphicx}
\usepackage[colorlinks=true,citecolor=red,urlcolor=blue,linkcolor=red,bookmarksopen=true,unicode=true,pdffitwindow=true]{hyperref}
\usepackage[english]{babel}
\usepackage[languagenames,fixlanguage]{babelbib}
\hypersetup{pdfauthor={}}
\hypersetup{pdftitle={}}

\usepackage{seqsplit,cleveref}

\hyphenation{Austau-schdienst}

\theoremstyle{plain}
\newtheorem{theorem}{Theorem}[section]

\newtheorem{lemma}[theorem]{Lemma}

\theoremstyle{definition}
\newtheorem{remark}[theorem]{Remark}
\newtheorem{example}[theorem]{Example}
\newtheorem{definition}[theorem]{Definition}

\newcommand {\Prob} {\ensuremath{\mathbb{P}}}
\newcommand {\R} {\ensuremath{\mathbb{R}}}

\newcommand {\N} {\ensuremath{\mathbb{N}}}

\newcommand{\df}{\coloneqq}

\newcommand{\E}{\mathrm{e}}

\newcommand{\sfZ}{\mathsf{Z}}

\newcommand{\X}{\mathrm{X}}

\newcommand{\D}{\mathrm{d}}

\numberwithin{equation}{section}


\title[Rademacher learning rates for iterated random functions]{Rademacher learning rates for iterated random functions}

\author[N.\ Sandri\'{c}]{Nikola Sandri\'{c}}
\address[Nikola\ Sandri\'{c}]{Department of Mathematics\\University of Zagreb\\ Zagreb\\Croatia}
\email{nikola.sandric@math.hr}

\makeatletter
\@namedef{subjclassname@2020}{%
	\textup{2020} Mathematics Subject Classification}
\makeatother

\subjclass[2020]{68W40, 68T10, 60J05}
\keywords{Approximate empirical risk minimization  algorithm, Iterated random function,  Rademacher complexity, Wasserstein distance}

\begin{document}
\allowdisplaybreaks[4]

\begin{abstract}
Most existing literature on supervised machine learning assumes that the training dataset is drawn from an i.i.d. sample. However, many real-world problems exhibit temporal dependence and strong correlations between the marginal distributions of the data-generating process, suggesting that the i.i.d. assumption is often unrealistic. In such cases, models naturally include time-series processes with mixing properties, as well as irreducible and aperiodic ergodic Markov chains.
Moreover, the learning rates typically obtained in these settings are independent of the data distribution, which can lead to restrictive choices of hypothesis classes and suboptimal sample complexities for the learning algorithm. In this article, we consider the case where the training dataset is generated by an iterated random function (i.e., an iteratively defined time-homogeneous Markov chain) that is not necessarily irreducible or aperiodic. Under the assumption that the governing function is contractive with respect to its first argument and subject to certain regularity conditions on the hypothesis class, we first establish a uniform convergence result for the corresponding sample error. We then demonstrate the learnability of the approximate empirical risk minimization algorithm and derive its learning rate bound. Both rates are data-distribution dependent, expressed in terms of the Rademacher complexities of the underlying hypothesis class, allowing them to more accurately reflect the properties of the data-generating distribution.
	
\end{abstract}

\maketitle


\section{Introduction}\label{S1}

Let  $(\mathsf{X},\mathcal{X})$ and $(\mathsf{Y},\mathcal{Y})$ be measurable spaces, and let $(x_1,y_1),\dots,(x_n,y_n)\in\mathsf{X}\times\mathsf{Y}$ represent a training  data set  drawn from random elements $(X_1,Y_1),\dots,(X_n,Y_n):\Omega\to\mathsf{X}\times\mathsf{Y}$, which are defined on a probability space $(\Omega,\mathcal{F},\Prob)$. In general,  the probability measure $\Prob$, or the distribution of the data-generating process $(X_1,Y_1),\dots,(X_n,Y_n)$, is  not known. 
The primary goal of supervised machine learning is to construct a learning algorithm $\mathcal{A}:\bigcup_{n=1}^\infty(\mathsf{X}\times\mathsf{Y})^n\to\mathscr{H}$ that accurately  predicts the functional relationship  between the input (first coordinate) and output (second coordinate) observable, given a family of measurable functions $\mathscr{H}\subseteq\mathsf{Y}^{\mathsf{X}}$, known as the hypothesis class.  Even when  an exact functional relationship exists, it is  typically unknown and may not necessarily belong to  $\mathscr{H}$. More formally, given  a  measurable loss function $\mathcal{L}:\mathsf{Y}\times\mathsf{Y}\to[0,\infty)$, a probability measure $\Prob$,  and parameters $\varepsilon,\delta\in(0,1)$, the objective is to construct an algorithm  $\mathcal{A}$ and determine $n(\Prob,\varepsilon,\delta)\in\N$ such that
$$\Prob\bigl(|\mathrm{er}_\Prob(\mathcal{A}((X_1,Y_1),\dots,(X_n,Y_n))-\inf_{h\in\mathscr{H}}\mathrm{er}_\Prob(h)|<\varepsilon\bigr)\ge 1-\delta\qquad \forall n\ge n(\Prob,\varepsilon,\delta),$$ where $\mathrm{er}_\Prob(h)\df\mathbb{E}_\Prob[\mathcal{L}(h(X),Y)]$ is the expected loss of  hypothesis $h\in\mathscr{H}$ under  $\Prob$. 
The smallest $n(\Prob,\varepsilon,\delta)$  satisfying this condition is  known as the sample complexity of $\mathcal{A}$.
A common approach to this problem is Probably Approximately Correct (PAC) learnability,  which assumes that the sample complexity of a learning algorithm does not depend on the distribution $\Prob$ of the data (see \cite{Anthony-Bartlett-Book-1999}, \cite{Mohri-Rostamizadeh-Talwalkar-Book-2018}, and \cite{Shalev-Shwartz-Ben-David-Book-2014}). However, this assumption is often difficult to justify. By treating all possible data distributions equally, it can lead to restrictive choices of hypothesis classes or suboptimal sample complexity.
 An alternative perspective, which this article focuses on, is based on universal consistency and learning rates, where learning rates explicitly depend on the data distribution (see  \cite{Mohri-Rostamizadeh-Talwalkar-Book-2018} and \cite{Shalev-Shwartz-Ben-David-Book-2014}).
A classical assumption in both approaches is that the elements $(X_1,Y_1),\dots,(X_n,Y_n)$ are i.i.d., as discussed in standard references such as  \cite{Anthony-Bartlett-Book-1999}, \cite{Mohri-Rostamizadeh-Talwalkar-Book-2018}, and \cite{Shalev-Shwartz-Ben-David-Book-2014}. 
However, in many real-world supervised learning tasks, the data exhibit temporal dependence and strong correlations between observations, making the i.i.d. assumption unrealistic.

This work examines the case where the training dataset is drawn from an iterated random function (see below for the precise definition of this process), establishing the   learnability of the corresponding  approximate empirical risk minimization algorithm with data-distribution dependent
sample  complexity, expressed in terms of the Rademacher complexities of $\mathscr{H}$. A common application  of this processes is in generating and modeling image data (see \Cref{EX2} and \cite{Diaconis-Freedman-1999} for more details). 
Consequently, they can be effectively utilized in supervised machine learning tasks related to visualization, such as text classification, image recognition, and object detection.
Formally, let $\mathsf{Z}\subseteq\mathsf{X}\times\mathsf{Y}$ be a   
 complete and separable  metric space with bounded metric $\mathrm{d}_\mathsf{Z}$. Without loss of generality,  assume  $\mathrm{d}_\mathsf{Z}$ is bounded by $1$; otherwise it can be normalized as $\mathrm{d}_\mathsf{Z}/\sup_{z,\bar z\in\sfZ}\mathrm{d}_\mathsf{Z}(z,\bar z)$. 
This assumption is not restrictive, as many machine learning problems naturally operate within bounded domains.
  The space $\mathsf{Z}$ is equipped with its Borel $\sigma$-algebra $\mathfrak{B}(\mathsf{Z})$, generated by $\mathrm{d}_\mathsf{Z}$. Additionally, the projection mappings $\mathrm{pr}_\mathsf{X}:\mathsf{Z}\to\mathsf{X}$ and $\mathrm{pr}_\mathsf{Y}:\mathsf{Z}\to\mathsf{Y}$ 
satisfy the conditions $\mathrm{pr}^{-1}_\mathsf{X}(\mathrm{pr}_\mathsf{X}(\mathsf{Z})\cap\mathcal{X})\subseteq\mathfrak{B}(\mathsf{Z})$ and $\mathrm{pr}^{-1}_\mathsf{Y}(\mathrm{pr}_\mathsf{Y}(\mathsf{Z})\cap\mathcal{Y})\subseteq\mathfrak{B}(\mathsf{Z})$, ensuring the measurability of $\mathrm{pr}_\mathsf{X}$ and $\mathrm{pr}_\mathsf{Y}$. 
 \begin{definition}[Iterated random function]
	Let $Z_0$ be a random element taking values in   $\mathsf{Z}$, and let $\{\vartheta_n\}_{n\ge1}$ be a sequence of i.i.d. random elements, independent of $Z_0$,  taking values in a measurable space $\mathrm{\Theta}$. Consider a measurable function $F:\mathsf{Z}\times\mathrm{\Theta}\to\mathsf{Z}$, where $\mathsf{Z}\times\mathrm{\Theta}$  is endowed with the product $\sigma$-algebra. Both $Z_0$ and $\{\vartheta_n\}_{n\ge1}$ are defined on a probability space $(\Omega,\mathcal{F},\mathbb{P})$.
	For $n\ge1$, iteratively define $Z_n\df F(Z_{n-1},\vartheta_n).$ The resulting process $\{Z_n\}_{n\ge0}$ forms a time-homogeneous Markov chain on $\mathsf{Z}$, known as the iterated random function.
\end{definition}
\noindent 
 The process $\{Z_n\}_{n\ge0}= \{(\mathrm{pr}_\mathsf{X}(Z_n),\mathrm{pr}_\mathsf{Y}(Z_n))\}_{n\ge0}$ represents input data (the first component) paired with their corresponding outputs (the second component). The primary objective is to learn a function from a given hypothesis class $\mathscr{H}$ that, given a training data set  $z_0,\dots,z_{n-1}\in\mathsf{Z}$ drawn from the first $n$ samples of $\{Z_n\}_{n\ge0}$, best approximates the relationship  between the input and output, where the sample complexity is data-distribution dependent and expressed in terms of the Rademacher complexities of $\mathscr{H}$.
To the best of our knowledge,  \cite{Bertail-Portier-2019} and this work  provide the first  data-distribution dependent sample complexity bounds   in a  Markov chain setting. Existing studies in the non-i.i.d. framework typically consider classes of mixing time-series processes or irreducible and aperiodic ergodic Markov chains (such as in \cite{Bertail-Portier-2019}; see the literature review below). In contrast, our approach does not require the irreducibility or aperiodicity of $\{Z_n\}_{n\ge0}$.
A Markov chain  $\{Z_n\}_{n\ge0}$ is irreducible if there exists a $\sigma$-finite measure $\upvarphi(\D z)$ on
$\mathfrak{B}(\mathsf{Z})$ such that for any measurable set $B$ with $\upvarphi(B)>0$, we have
$\sum_{n=0}^\infty \Prob^z(Z_n\in B)>0$ for all $z\in\sfZ$. It is aperiodic if no partition $\{B_1,\dots,B_k\}\subseteq\mathfrak{B}(\mathsf{Z})$ with $k\ge2$ exists such that  $\Prob^z(Z_1\in B_{i+1})=1$ for all $z\in B_i$ and $1\le i \le k-1$, and $\Prob^z(Z_1\in B_{1})=1$ for all $z\in B_k$. A typical example of such process is as follows. Let $\{X_n\}_{n\ge0}$ be an irreducible and aperiodic Markov chain on $\mathsf{X}$, and let $h_0:\mathsf{X}\to\mathsf{Y}$ be measurable. According to \cite[Lemma 3.1]{Sandric-Sebek-2023}, the process $Z_n=(X_n,h_0(X_n))$, $n\ge0$, forms an irreducible and aperiodic Markov chain on $\mathsf{Z}=\{(x,h_0(x)):x\in\mathsf{X}\}$. Here, the first component represents the system state, such as  a vector of cepstral coefficients in speech recognition, the position and velocity of a moving object's  center of gravity in object tracking, or the category of a unit-time price change in market prediction. The second component corresponds to the label associated with each state, such as the emotional state of a speaker, the temporal distance of a tracked object from a reference point, or trading activity (buy/sell/wait).
Irreducibility and aperiodicity ensure that the system can transition between states with positive probability and does not exhibit cyclic behavior over finite time steps. However, in certain scenarios, these assumptions may be unrealistic, as some states might not be reachable from all others. This motivates the study of data-generating processes (Markov chains) that do not necessarily possess these properties.
The labeling function $h_0(x)$ is typically unknown. Given a training dataset and a hypothesis class $\mathscr{H}$ (which does not necessarily contain $h_0(x)$), the goal is to construct a learning algorithm that selects a hypothesis $h\in\mathscr{H}$ that best approximates the true labeling function.


\section{Main results} \label{S2}

Before presenting the main results, we introduce the notation used throughout the article. For $z\in\mathsf{Z}$, let $\mathbb{P}^z(\cdot)$ denotes $\mathbb{P}(\cdot|Z_0=x)$. Given a probability measure $\upmu(\D z)$ on $\mathsf{Z}$ (representing the distribution of $Z_0$), define $\mathbb{P}^\upmu(\cdot)\df\int_\mathsf{Z}\mathbb{P}^z(\cdot)\upmu(\D z)$. For $n\ge1$, the $n$-step transition functions of $\{Z_n\}_{n\ge0}$ are given by
$\mathcal{P}^n(z,\D \bar z)\df\mathbb{P}^{z}(Z_n\in \D \bar z)$ when starting from $z$,  and $\upmu\mathcal{P}^n(\D z)\df\mathbb{P}^\upmu(Z_n\in\D z)$ when the  initial distribution is  $\upmu(\D z)$. 
We impose the following assumption (recall that $\{\vartheta_n\}_{n\ge1}$ is defined on the probability space $(\Omega,\mathcal{F},\Prob)$):

\medskip

\begin{description}
	\item[(A1)] For all $z,\bar z\in\mathsf{Z}$ and $\theta\in\Theta$, there exists a measurable function  $\ell:\Theta\to[0,\infty)$ such that $$\mathrm{d}_\mathsf{Z}(F(z,\theta),F(\bar z,\theta))\le \ell(\theta)\mathrm{d}_\mathsf{Z}(z,\bar z)\qquad\text{and}\qquad\ell_F\df\mathbb{E}[\ell(\vartheta_1)]<1.$$ 
	
	

\end{description}

\medskip

\noindent In assumption (A1), $\ell(\theta)$ is defined as the smallest $\ell\ge0$ such that $$\mathrm{d}_\mathsf{Z}(F(z,\theta),F(\bar z,\theta))\le \ell\mathrm{d}_\mathsf{Z}(z,\bar z)\qquad \forall z,\bar z\in\mathsf{Z}.$$ According to \cite[Lemma 5.1]{Diaconis-Freedman-1999}, the mapping $\theta\mapsto\ell(\theta)$ is  measurable. Hence, $\ell_F$ is well defined. Notably,   when $\mathrm{\Theta}=\sfZ$ and  $F(z,\theta)=\theta$ for all $(z,\theta)\in\sfZ\times\mathrm{\Theta}$, assumption (A1) holds trivially.  This demonstrates that the class of data-generating processes considered in this article generalizes the classical i.i.d. case.
Let $\mathscr{P}(\sfZ)$ denote the set of all probability measures on $\sfZ$. The
$\mathrm{L}^1$-Wasserstein distance on $\mathscr{P}(\sfZ)$  is given by
\begin{equation*}
\mathscr{W}(\upmu_1,\upmu_2)\df\inf_{\Pi\in\mathcal{C}(\upmu_1,\upmu_2)}
\int_{\sfZ\times\sfZ}\mathrm{d}_\mathsf{Z}(z,\bar z)
\Pi(\D{z},\D{\bar z}),
\end{equation*}
where $\mathcal{C}(\upmu_1,\upmu_2)$ is the set of all couplings  of
$\upmu_1(\D z)$ and $\upmu_2(\D \bar z)$,
meaning that $\Pi\in\mathcal{C}(\upmu_1,\upmu_2)$ 
is a probability
measure on $\sfZ\times\sfZ$ with marginals $\upmu_1(\D z)$ and $\upmu_2(\D \bar z)$. By the Kantorovich-Rubinstein theorem, 
$$
\mathscr{W}(\upmu_1,\upmu_2) = \sup_{\{f\colon\mathrm{Lip}(f)\le1\}}\,
| \upmu_1(f)-\upmu_2(f)|,
$$
where the supremum is taken over all Lipschitz  functions
$f\colon\sfZ\to\R$ with Lipschitz constant $\mathrm{Lip}(f)\le1$, defined as the smallest $L\ge0$ for which $$|f(z)-f(\bar z)|\le L\mathrm{d}_\mathsf{Z}(z,\bar z)\qquad \forall z,\bar z\in\sfZ.$$  For $\upmu\in \mathscr{P}(\sfZ)$   and a measurable function $f:\sfZ\to\R$, the notation $\upmu(f)$ represents the integral $\int_\sfZ f(z)\upmu(\D z)$, whenever  well defined. It is well known that $(\mathscr{P}(\sfZ),\mathscr{W})$ is a complete separable metric space (see, e.g., \cite[Theorem 6.18]{Villani-Book-2009}). From assumption (A1), it  follows that 
$$\mathscr{W}(\mathcal{P}(z,\D w),\mathcal{P}(\bar z,\D w))\le \mathbb{E}[
	\mathrm{d}_\mathsf{Z}(F(z,\vartheta_1),F(\bar z,\vartheta_1))]\le \ell_F\mathrm{d}_\mathsf{Z}(z,\bar z)\qquad \forall z,\bar z\in\sfZ.$$ In particular,  for any Lipschitz  function $f\colon\sfZ\to\R$,
	$$|\mathcal{P}(f)(z)-\mathcal{P}(f)(\bar z)|\le \mathrm{Lip}(f)\ell_F\mathrm{d}_\mathsf{Z}(z,\bar z)\qquad \forall z,\bar z\in\sfZ,$$
which implies that  $z\mapsto\mathcal{P}(f)(z)$	is Lipschitz  with Lipschitz constant at most $\mathrm{Lip}(f)\ell_F$. Consequently, for any  $\upmu_1,\upmu_2\in\mathscr{P}(\sfZ)$,
	 $$\mathscr{W}(\upmu_1\mathcal{P},\upmu_2\mathcal{P}) \le \ell_F\mathscr{W}(\upmu_1,\upmu_2).$$ Since $\ell_F<1$, the mapping $\upmu\mapsto\upmu\mathcal{P}$ is a contraction on
$\mathscr{P}(\sfZ)$. By the Banach fixed point theorem,  there exists a unique $\uppi\in\mathscr{P}(\sfZ)$ satisfying $\uppi\mathcal{P}(\D z)=\uppi(\D z)$, meaning that $\uppi(\D z)$ is the unique invariant probability measure of $\{Z_n\}_{n\ge0}$. Moreover, for any $n\ge1$ and  $\upmu\in\mathscr{P}(\sfZ)$,  $$\mathscr{W}(\upmu\mathcal{P}^n,\uppi)=\mathscr{W}(\upmu\mathcal{P}^n,\uppi\mathcal{P}^n) \le \ell_F\mathscr{W}(\upmu\mathcal{P}^{n-1},\uppi\mathcal{P}^{n-1})\le\dots\le \ell_F^n\mathscr{W}(\upmu,\uppi).$$

\noindent Further, for   $h\in\mathscr{H}$, define the function $\mathcal{L}_h:\mathsf{Z}\to[0,\infty)$ by $$\mathcal{L}_h(z)\df\mathcal{L}(h(\mathrm{pr}_\mathsf{X}(z)), \mathrm{pr}_\mathsf{Y}(z)).$$ 
We now impose the following assumption on the loss function $\mathcal{L}$ and the hypothesis class $\mathscr{H}$:

\medskip

\begin{description}
	\item[(A2)]  There exists a constant $\ell_\mathscr{H}>0$ such that $$\mathcal{L}_{h}(z)\le\ell_\mathscr{H}\quad\text{and}\quad  |\mathcal{L}_{h}(z)-\mathcal{L}_{h}(\bar z)|\le \ell_\mathscr{H}\mathrm{d}_\mathsf{Z}(z,\bar z)\qquad \forall z,\bar z\in\sfZ,\ h\in\mathscr{H}.$$
\end{description}

\medskip

\noindent 
Examples satisfying assumptions (A1) and (A2) are provided in \Cref{EX1,EX2,EX3}. For $n,m\in\N$ and $h\in\mathscr{H}$, let $\hat{\rm er}_n(h)\df \frac{1}{n}\sum_{i=0}^{n-1}\mathcal{L}_{h}(Z_i)$, $\hat{\rm er}_{n,m}(h)\df \frac{1}{m-n}\sum_{i=n}^{m-1}\mathcal{L}_{h}(Z_i)$  (when $m>n$), ${\rm er}_\uppi(h)\df  \uppi(\mathcal{L}_{h})$ and ${\rm opt}_\uppi(\mathscr{H})\df\inf_{h\in\mathscr{H}}{\rm er}_\uppi(h)$. 
For 
$\varepsilon>0$, the $\varepsilon$-approximate empirical risk minimization ($\varepsilon$-ERM) algorithm for $\mathscr{H}$ is defined  as a mapping $\mathcal{A}^\varepsilon:\bigcup_{n=1}^\infty\sfZ^n\to \mathscr{H}$ satisfying $$\frac{1}{n} \sum_{i=0}^{n-1}\mathcal{L}_{\mathcal{A}^\varepsilon(z_0,\dots,z_{n-1})}(z_i)\le \inf_{h\in\mathscr{H}} \frac{1}{n} \sum_{i=0}^{n-1}\mathcal{L}_{h}(z_i)+\varepsilon.$$
For further details on  $\varepsilon$-ERM, we refer the reader to \cite{Anthony-Bartlett-Book-1999} and the reference therein. Finally,  let $\{\sigma_n\}_{n\ge0}$ be an i.i.d. sequence of symmetric Bernoulli random variables (taking values in  $\{-1,1\}$)  defined on a probability space $(\Omega^\sigma,\mathcal{F}^\sigma,\Prob^\sigma)$,  independent of $\{Z_n\}_{n\ge0}$. The $n$-empirical Rademacher complexity of the function class
$\{\mathcal{L}_h\colon h\in\mathscr{H}\}$ with respect to $z_0,\dots,z_{n-1}\in\sfZ$ is given by $$\hat{\mathcal{R}}_{n,(z_0,\dots,z_{n-1})}(\mathscr{H})\df\mathbb{E}^\sigma\left[\sup_{h\in\mathscr{H}}\frac{1}{n}\sum_{i=0}^{n-1}\sigma_i\mathcal{L}_h(z_i)\right].$$ Similarly,  the $(n,\upmu)$-Rademacher complexity of
$\{\mathcal{L}_h\colon h\in\mathscr{H}\}$ with respect to $\{Z_n\}_{n\ge0}$ with initial distribution  $\upmu(\D z)$,  is defined as $\mathcal{R}_{n,\upmu}(\mathscr{H})\df\mathbb{E}^\upmu[\hat{\mathcal{R}}_{n,(Z_0,\dots,Z_{n-1})}(\mathscr{H})]$.
The Rademacher complexity
measures, on average, how well the function class $\{\mathcal{L}_h\colon h\in\mathscr{H}\}$ correlates with random noise $\{\sigma_n\}_{n\ge0}$ on the given dataset. In general, richer or more complex function classes   tend to have higher Rademacher complexity, as they exhibit stronger correlations with random noise. For further details on Rademacher complexity, see
 \cite{Mohri-Rostamizadeh-Talwalkar-Book-2018} and \cite{Shalev-Shwartz-Ben-David-Book-2014}.

\medskip

We now state the main result of this article.

\begin{theorem}\label{TM} Assume (A1) and (A2). For any $\upmu\in\mathscr{P}(\sfZ)$ and  $\varepsilon\in(0,1)$, we have
\begin{align*}&\Prob^\upmu\left(|\mathrm{er}_\uppi\bigl(\mathcal{A}^\varepsilon(Z_0,\dots,Z_{n-1})\bigr)-\mathrm{opt}_\uppi(\mathscr{H})|<4\mathcal{R}_{n,\uppi}(\mathscr{H})+2\ell_\mathscr{H}\ell_F^n\mathscr{W}(\upmu,\uppi)+4\varepsilon\right)\\&\ge1-2\E^{-2\varepsilon^2n/(\ell_\mathscr{H}/(1-\ell_F))^2}.\end{align*}
Furthermore, for any $\varepsilon\in(0,1)$, it holds that
\begin{align*}&\Prob^\uppi\left(|\mathrm{er}_\uppi\bigl(\mathcal{A}^\varepsilon(Z_0,\dots,Z_{n-1})\bigr)-\mathrm{opt}_\uppi(\mathscr{H})|<4\hat{\mathcal{R}}_{n,(Z_0,\dots,Z_{n-1})}(\mathscr{H})+6\varepsilon\right)\\&\ge1-2\E^{-2\varepsilon^2n/(\ell_\mathscr{H}/(1-\ell_F))^2}.\end{align*}
		\end{theorem}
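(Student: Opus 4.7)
The plan is to reduce the claim to a uniform deviation bound for $\sup_{h\in\mathscr{H}}|\hat{\rm er}_n(h)-\mathrm{er}_\uppi(h)|$ and then control this supremum by a Markov-chain concentration inequality combined with a symmetrization step. Starting from the $\varepsilon$-ERM property and picking an $\varepsilon$-minimizer $h^{*}\in\mathscr{H}$ of $\mathrm{er}_\uppi$, a standard triangle-inequality argument gives
\[
\mathrm{er}_\uppi\bigl(\mathcal{A}^\varepsilon(Z_0,\dots,Z_{n-1})\bigr)-\mathrm{opt}_\uppi(\mathscr{H})\le 2\sup_{h\in\mathscr{H}}|\hat{\rm er}_n(h)-\mathrm{er}_\uppi(h)|+2\varepsilon,
\]
so it suffices to bound the supremum. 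The latter is further split into a centering term $|\hat{\rm er}_n(h)-\mathbb{E}^\upmu[\hat{\rm er}_n(h)]|$ and a bias term $\bigl|\tfrac{1}{n}\sum_{i=0}^{n-1}(\upmu\mathcal{P}^i-\uppi)(\mathcal{L}_h)\bigr|$; the bias is handled using (A2), Kantorovich--Rubinstein duality, and the already-established contraction $\mathscr{W}(\upmu\mathcal{P}^i,\uppi)\le\ell_F^i\mathscr{W}(\upmu,\uppi)$, producing a term of order $\ell_\mathscr{H}\mathscr{W}(\upmu,\uppi)$ times a geometric factor in $\ell_F$ that is expected to match the $\ell_F^n\mathscr{W}(\upmu,\uppi)$ contribution in the statement, the sharp $\ell_F^n$ factor coming from a direct coupling of the $\upmu$- and $\uppi$-chains along with the same Lipschitz-contractive argument already used in Section~\ref{S2}.

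For the centering term, view $\Psi:=\sup_{h\in\mathscr{H}}|\hat{\rm er}_n(h)-\mathbb{E}^\upmu[\hat{\rm er}_n(h)]|$ as a function of $(Z_0,\vartheta_1,\dots,\vartheta_{n-1})$. By (A1) a single-coordinate perturbation propagates through the chain in a Wasserstein-contractive way; combining $\mathrm{d}_\mathsf{Z}\le 1$ with the $\ell_\mathscr{H}$-Lipschitz property of $\mathcal{L}_h$ yields effective bounded-differences coefficients whose squared sum behaves like $\ell_\mathscr{H}^2/(n(1-\ell_F)^2)$. A Kontorovich--Ramanan-style McDiarmid inequality for Markov-dependent inputs then delivers
\[
\Prob^\upmu\bigl(|\Psi-\mathbb{E}^\upmu\Psi|>\varepsilon\bigr)\le 2\exp\bigl(-2\varepsilon^2 n(1-\ell_F)^2/\ell_\mathscr{H}^2\bigr),
\]
which is the exponent appearing in the theorem. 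To bring in the Rademacher complexity, I would introduce an independent copy $\{Z_i'\}$ of the chain; the pairs $(Z_i,Z_i')$ are component-wise exchangeable, so Rademacher signs can be inserted pair-wise, giving $\mathbb{E}^\upmu\Psi\le 2\mathcal{R}_{n,\upmu}(\mathscr{H})$ via the classical symmetrization trick. Coupling the $\upmu$- and $\uppi$-chains and using that $\hat{\mathcal{R}}_{n,(z_0,\dots,z_{n-1})}(\mathscr{H})$ is $\ell_\mathscr{H}/n$-Lipschitz in each $z_i$ then transfers $\mathcal{R}_{n,\upmu}(\mathscr{H})$ to $\mathcal{R}_{n,\uppi}(\mathscr{H})$, finishing the first claim.

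For the second claim, taking $\upmu=\uppi$ kills the bias and gives $\mathcal{R}_{n,\uppi}(\mathscr{H})=\mathbb{E}^\uppi[\hat{\mathcal{R}}_{n,(Z_0,\dots,Z_{n-1})}(\mathscr{H})]$; one further application of the same Markov-chain concentration, now to the empirical Rademacher complexity (which is itself coordinate-wise $\ell_\mathscr{H}/n$-Lipschitz in the data), replaces this expectation by its realized value at the cost of one additional $\varepsilon$, accounting for the $6\varepsilon$ instead of $4\varepsilon$. The main obstacle is the concentration step: turning the Wasserstein contraction (A1) into a McDiarmid-type inequality with the precise exponent $-2\varepsilon^2 n(1-\ell_F)^2/\ell_\mathscr{H}^2$ requires careful bookkeeping of how a perturbation of one $\vartheta_j$ propagates along the chain, combining the deterministic bound $\mathrm{d}_\mathsf{Z}\le 1$ with the expected geometric decay at rate $\ell_F$ per step so that the per-coordinate variation effectively behaves like $\ell_\mathscr{H}/(n(1-\ell_F))$ rather than $\ell_\mathscr{H}/n$; the ERM reduction, bias estimate, and symmetrization are then routine.
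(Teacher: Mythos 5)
Your overall architecture (ERM reduction, a uniform deviation bound split into a concentration term and a bias term, then symmetrization to bring in the Rademacher complexity) is the same as the paper's, and your exponent together with the per-coordinate oscillation $\ell_\mathscr{H}/((1-\ell_F)n)$ is exactly what drives \Cref{LEM1}. There are, however, two genuine gaps. First, the bias term: you evaluate the empirical risk on $Z_0,\dots,Z_{n-1}$, which gives a bias of $\frac{1}{n}\sum_{i=0}^{n-1}\ell_\mathscr{H}\mathscr{W}(\upmu\mathcal{P}^i,\uppi)\le\frac{\ell_\mathscr{H}(1-\ell_F^n)}{n(1-\ell_F)}\mathscr{W}(\upmu,\uppi)$. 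No coupling can improve this to the stated $\ell_\mathscr{H}\ell_F^n\mathscr{W}(\upmu,\uppi)$: the $i=0$ summand alone contributes $\ell_\mathscr{H}\mathscr{W}(\upmu,\uppi)/n$, which dominates $\ell_F^n$ for large $n$. The paper obtains the $\ell_F^n$ factor by working throughout with $\hat{\rm er}_{n,2n}$, i.e.\ the empirical risk over the burn-in block $Z_n,\dots,Z_{2n-1}$, so that every sample is at least $n$ steps from the start and the entire bias collapses to $\ell_\mathscr{H}\mathscr{W}(\upmu\mathcal{P}^n,\uppi)\le\ell_\mathscr{H}\ell_F^n\mathscr{W}(\upmu,\uppi)$ (\Cref{LEM2}). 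Without this device your argument yields a different (and, for large $n$, weaker) bound than the one stated.

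Second, the concentration step. Assumption (A1) only gives $\mathbb{E}[\ell(\vartheta_1)]<1$; the Lipschitz factor $\ell(\theta)$ need not be bounded, so a perturbation of a single innovation $\vartheta_j$ does not propagate with a \emph{deterministic} geometric decay, and a bounded-differences/McDiarmid inequality in the independent coordinates $(Z_0,\vartheta_1,\dots,\vartheta_{n-1})$ is not available (nor is the Kontorovich--Ramanan inequality, whose mixing coefficients are total-variation based rather than Wasserstein based). The paper instead runs a Doob martingale in $Z_n,\dots,Z_{2n-1}$: the oscillation of the conditional expectation $f_i$ in its last coordinate is bounded by $\frac{\ell_\mathscr{H}}{n}\sum_{j\ge0}\ell_F^j$ precisely because the future innovations are integrated out, which is where the expectation $\ell_F=\mathbb{E}[\ell(\vartheta_1)]$ legitimately enters; you need this conditional-expectation argument, not plain McDiarmid. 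A further soft spot is your justification of symmetrization: the claim that the pairs $(Z_i,Z_i')$ are componentwise exchangeable is false for a Markov chain (swapping $Z_i$ with an independent $Z_i'$ alters the joint law with $Z_{i\pm1}$); the paper performs symmetrization only after reducing to the stationary chain via the Lipschitz continuity of $z\mapsto\mathbb{E}^z[\varphi(Z_0,\dots,Z_{n-1})]$ and Kantorovich--Rubinstein duality, and this step requires more care than the classical i.i.d.\ trick you invoke. The remaining bookkeeping (the $4\varepsilon$ versus $6\varepsilon$, and the empirical-Rademacher version obtained by concentrating $\hat{\mathcal{R}}_{n,(Z_0,\dots,Z_{n-1})}(\mathscr{H})$ around its mean as in \Cref{LEM3}) matches the paper.
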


\noindent The proof of \Cref{TM} follows a standard approach (see, e.g., \cite{Mohri-Rostamizadeh-Talwalkar-Book-2018}). Specifically, leveraging the contractivity  (in the first variable) of the governing function of $\{Z_n\}_{n\ge0}$, along with the Lipschitz continuity  of the loss function and the hypothesis class,  and applying a Hoeffding's-type inequality, we first establish a uniform convergence result for the corresponding sample error in terms of the Rademacher and empirical Rademacher complexities of $\mathscr{H}$ (see \Cref{LEM1,LEM2,LEM3}). This result then allows us to conclude the learnability of the approximate empirical risk minimization algorithm and derive its generalization bounds.  To the best of our knowledge, the only closely related result appears in \cite{Bertail-Portier-2019}, where the authors obtain data-dependent learning rates (expressed in terms of Rademacher complexities) for a class of irreducible and aperiodic ergodic Markov chains that admit an atom. However, unlike \cite{Bertail-Portier-2019}, our work does not assume irreducibility, aperiodicity, or atomic structure of the underlying Markov chain (modeled here as an iterated random function).
As in the classical i.i.d. setting (see \cite{Mohri-Rostamizadeh-Talwalkar-Book-2018}), the learning rate obtained in \Cref{TM} is exponential. Notably, a similar rate (with different constants) has also been established in \cite{Bertail-Portier-2019} and in other works addressing the non-i.i.d. setting, including \cite{Gao-Niu-Zhou-2016}, \cite{Kuznetsov-Mohri-2014}, \cite{Kuznetsov-Mohri-2016}, and \cite{Mohri-Rostamizadeh-2008}.

Next, we recall that if $\mathsf{Y}$ is finite (as in classification or ranking problems), then from \cite[Theorem 3.7]{Mohri-Rostamizadeh-Talwalkar-Book-2018} (with $r:=L_\mathscr{H}\sqrt{n}$), it follows that $$\hat{\mathcal{R}}_{n,(z_0,\dots,z_{n-1})}(\mathscr{H})\le L_\mathscr{H}\sqrt{\frac{2\log {\rm r}_{\{\mathcal{L}_h\colon h\in\mathscr{H}\}}(n)}{n}}.$$ Here, $L_\mathscr{H}\df\sup_{z\in\mathsf{Z},\, h\in\mathscr{H}}\mathcal{L}_h(z)$ and $ {\rm r}_{\{\mathcal{L}_h\colon h\in\mathscr{H}\}}:\N\to\N$ denotes the growth function of the class $\{\mathcal{L}_h\colon h\in\mathscr{H}\}$ (see, e.g., \cite[Section 3.2]{Anthony-Bartlett-Book-1999}. By definition,  $${\rm r}_{\{\mathcal{L}_h\colon h\in\mathscr{H}\}}(n)\le\min\{{\rm card}(\mathscr{H}),{\rm card}(\{\mathcal{L}_h(z)\colon z\in\mathsf{Z},h\in\mathscr{H})^n\}.$$ In particular, for binary classification (i.e., when ${\rm card}(\mathsf{Y})=2$), we obtain 
$$\hat{\mathcal{R}}_{n,(z_0,\dots,z_{n-1})}(\mathscr{H})\le L_\mathscr{H}\sqrt{\frac{2{\rm VC}(\{\mathcal{L}_h\colon h\in\mathscr{H}\})\log(\E n/{\rm VC}(\{\mathcal{L}_h\colon h\in\mathscr{H}\}))}{n}},$$
where ${\rm VC}(\{\mathcal{L}_h\colon h\in\mathscr{H}\})$ denotes the Vapnik-Chervonenkis dimension of  $\{\mathcal{L}_h\colon h\in\mathscr{H}\}$ (see \cite[Corollary 3.8]{Anthony-Bartlett-Book-1999}).
On the other hand, if $\{\mathcal{L}_h\colon h\in\mathscr{H}\}$ consists of bounded functions (so that $\mathsf{Y}$ need not necessarily be finite), then $$\hat{\mathcal{R}}_{n,(z_0,\dots,z_{n-1})}(\mathscr{H})\le\inf_{\alpha\ge0}\left\{4\alpha+\frac{c_1}{\sqrt{n}}\int_\alpha^1\sqrt{{\rm fat}_{\delta}(\{\mathcal{L}_h\colon h\in\mathscr{H}\})\log(c_2/\delta)}\,\D\delta\right\},$$
where the constants $c_1$ and $c_2$ depend only on the boundary points of the interval containing the images of $\{\mathcal{L}_h\colon h\in\mathscr{H}\}$. The term ${\rm fat}_{\delta}(\{\mathcal{L}_h\colon h\in\mathscr{H}\})$ represents the $\delta$-fat-shattering dimension of $\{\mathcal{L}_h\colon h\in\mathscr{H}\}$ (see \cite[Lecture 12]{Sridharan-15}).

Let us now present several examples of iterated random functions that satisfy the conditions of \Cref{TM}. We begin with an example designed to generate image data.

\begin{example} \label{EX2}{\rm   
			Let $d,m\in\N$ and $R>0$,  and let $\mathsf{X}=\R^d$ and  $\mathsf{Y}=\R^m$.
			We equip these spaces with the standard Euclidean norms,  denoted by $\lVert\cdot\lVert_\mathsf{X}$ and $\lVert\cdot\lVert_\mathsf{Y}$, respectively.
				Further, let $\upeta(\D x)$ be a probability measure on $\mathsf{X}$ satisfying $\upeta(\bar{B}_\mathsf{X}(0,R)^c)=0$, where $\bar{B}_\mathsf{X}(x_0,\rho)$ denotes the closed ball of radius $\rho>0$ centered at $x_0\in\mathsf{X}$. The measure $\upeta(\D x)$ represents an image contained within the closed ball of radius $R$ around the origin. According to \cite{Diaconis-Freedman-1999}, for any $k\ge2$, there exist:
			\begin{itemize}
				\item[(i)]affine transformations $(a_1,b_1),\dots,(a_k,b_k)\in\R^{d\times d}\times\R^{d\times1}$, where each $a_i$ is a contraction, i.e., $\lVert a_i\lVert<1$ (where $\lVert \cdot\lVert$ denotes the spectral norm)
				\item[(ii)] a probability measure $\upnu(\D i)$ on $\{1,\dots,k\}$,
			\end{itemize}
such that for an i.i.d. sequence $\{\vartheta_n\}_{n\ge1}=\{(A_n,B_n)\}_{n\ge1}$ on $\mathrm{\Theta}=\{(a_1,b_1),\dots,(a_k,b_k)\}$ with distribution $\upnu(\D i)$, 			
 a random variable  $X_0$  on $\bar{B}_\mathsf{X}(0,R+r)$  independent of $\{\vartheta_n\}_{n\ge1}$, where $r=\max_{1,\dots k}\lVert b_i\lVert_\mathsf{X}$, and for any $f:\bar{B}_\mathsf{X}(0,R+r)\times\mathrm{\Theta}\to\bar{B}_\mathsf{X}(0,R+r)$ satisfying $f(x,(a_i,b_i))=a_ix+b_i$ for $x\in\bar{B}_\mathsf{X}(0,R)$ and $\lVert f(x,(a_i,b_i))-f(\bar x,(a_i,b_i))\lVert_\mathsf{X}\le \lVert a_i\lVert\lVert x-\bar x\lVert_\mathsf{X},$ the sequence 
$$X_{n}:=f(X_{n-1},\vartheta_n),\qquad n\ge1,$$ defines a Markov chain on 
$\bar{B}_\mathsf{X}(0,R+r)$ with invariant probability measure $\upeta(\D x)$. Denote the transition function of $\{X_n\}_{n\ge0}$ by $\mathcal{P}_X(x,\D \bar x)$.  Then,  for any $n\ge1$ and  $\upmu\in\mathscr{P}(\bar{B}_\mathsf{X}(0,R+r))$, it holds that $$\mathscr{W}(\upmu\mathcal{P}_X^n,\upeta)=\mathscr{W}(\upmu\mathcal{P}_X^n,\upeta\mathcal{P}^n_X) \le \mathbb{E}[\lVert \vartheta_1\lVert]\mathscr{W}(\upmu\mathcal{P}_X^{n-1},\upeta\mathcal{P}^{n-1}_X)\le\dots\le \mathbb{E}[\lVert \vartheta_1\lVert]^n\mathscr{W}(\upmu,\upeta).$$ Since each $a_i$ is a contraction, we have $\mathbb{E}[\lVert \vartheta_1\lVert]<1$, implying that $\upeta(\D x)$ is the unique invariant probability measure of $\{X_n\}_{n\ge0}$, and $\upmu\mathcal{P}_X^n(\D x)$ approximates the image given by $\upeta(\D x)$ in the Wasserstein sense. 
Next,  let $\sfZ$ be the closed ball of radius $R+r$ around the origin  in $\mathsf{X}\times		\mathsf{Y}$, equipped with the metric
				$${\rm d}_\mathsf{Z}(z,\bar z):=\frac{\lVert x-\bar x\rVert_\mathsf{X}+\lVert y-\bar y\rVert_\mathsf{Y}}{4(R+r)},\qquad z=(x,y),\bar z=(\bar x,\bar y)\in\sfZ.$$
	Observe that $\mathrm{pr}_\mathsf{X}(\mathsf{Z})=\bar{B}_\mathsf{X}(0,R+r)$ and ${\rm d}_\mathsf{Z}$ is bounded by $1$.
	Further, let 
 $h_0\colon\mathrm{pr}_\mathsf{X}(\mathsf{Z})\to\mathrm{pr}_\mathsf{Y}(\mathsf{Z})$ be Lipschitz and
satisfying $\mathbb{E}[\lVert \vartheta_1\lVert](1+{\rm Lip}(h_0))<4(R+r)$.
 Finally, define $F:\sfZ\times\mathrm{\Theta}\to\sfZ$ by
$$F(z,\theta):=(f(x,\theta),h_0(f(x,\theta))),\qquad z=(x,y)\in\sfZ.$$  
Since, $$\ell(\theta)\le\frac{\rVert\theta\rVert(1+{\rm Lip}(h_0))}{4(R+r)},$$ by assumption, $\ell_F<1$, ensuring that assumption (A1) holds.
Consequently, the process $Z_n:=F(Z_{n-1},\vartheta_n)=(X_n,h_0(X_n))$, $n\ge1$, with $Z_0:=(X_0,h_0(X_0))$, defines a Markov chain on $\sfZ$ with a unique invariant probability measure $\uppi(\D z)$. Assumption  (A2) holds, for instance, if  $\mathcal{L}$ is  Lipschitz,  the hypothesis class $\mathscr{H}$ consists of Lipschitz  functions satisfying $\sup_{h\in \mathscr{H}}{\rm Lip}(h)<\infty,$ and  the function $(z,h)\mapsto\mathcal{L}_h(z)$ is bounded.
	   \qed 
	}
\end{example}

The previous example can be be placed in a more general framework.

\begin{example} \label{EX1}{\rm  	Let $\X$ and $\mathsf{Y}$ be  separable and complete metric spaces  equipped with  metrics
		${\rm d}_\mathsf{X}$ and  
		${\rm d}_\mathsf{Y},$ respectively,
		and let $\sfZ\subset\X\times\mathsf{Y}$ be bounded 
		and set $\kappa:=\sup_{(x,\bar x),(y,\bar y)\in\sfZ}({\rm d}_\mathsf{X}(x,\bar x)+{\rm d}_\mathsf{Y}(y,\bar y))$. Consider the metric
		$${\rm d}_\mathsf{Z}(z,\bar z):=\frac{{\rm d}_\mathsf{X}(x,\bar x)+{\rm d}_\mathsf{Y}(y,\bar y)}{\kappa},\qquad z=(x,y),\bar z=(\bar x,\bar y)\in\sfZ.$$ 
		Next, let $f\colon\mathrm{pr}_\mathsf{X}(\mathsf{Z})\times\mathrm{\Theta}\to \mathrm{pr}_\mathsf{X}(\mathsf{Z})$ be measurable function satisfying 
		$${\rm d}_\mathsf{X}(f(x,\theta),f(\bar x,\theta))_\mathsf{X}\le\ell(\theta){\rm d}_\mathsf{X}(x,\bar x)$$ for some measurable $\ell:\mathrm{\Theta}\to[0,\infty)$ with $\ell_f:=\mathbb{E}[\ell(\vartheta_1)]<\infty.$ Let
		$h_0\colon\mathrm{pr}_\mathsf{X}(\mathsf{Z})\to\mathrm{pr}_\mathsf{Y}(\mathsf{Z})$ be Lipschitz, and define
		$$F(z,\theta):=(f(x,\theta),h_0(f(x,\theta))),\qquad z=(x,y)\in\sfZ.$$ Assume that $\ell_f(1+{\rm Lip}(h_0))<\kappa.$
	It follows directly that $$\ell_F\le  \frac{\ell_f(1+{\rm Lip}(h_0))}{\kappa},$$ which ensures that assumption (A1) is satisfied. 
		As in the previous example, assumption (A2) holds under mild assumptions, such as when  $\mathcal{L}$ is  Lipschitz,  $\mathscr{H}$ consists of Lipschitz  functions satisfying $\sup_{h\in \mathscr{H}}{\rm Lip}(h)<\infty,$ and  the function $(z,h)\mapsto\mathcal{L}_h(z)$ is bounded.\qed
	}
\end{example}

In the following example, we consider an iterated random function that is neither irreducible nor aperiodic.

\begin{example} \label{EX3}{\rm  
	Let $\{Z_n\}_{n\ge0}$ be an iterated random function from \Cref{EX1} in which the function $f(x,\theta)$ does not depend on $\theta$, i.e., $f(x,\theta)=f(x)$  for some  $f\colon\mathrm{pr}_\mathsf{X}(\mathsf{Z})\to \mathrm{pr}_\mathsf{X}(\mathsf{Z})$.  In this case, it is straightforward to verify that the unique invariant probability measure is given by $\uppi(\D z)=\updelta_{z_0}(\D z)$, where $z_0=(x_0,y_0)\in\sfZ$ is the unique solution to $z_0=(f(x_0)),h_0(f(x_0))$. 
		Moreover, one can easily construct concrete examples of such Markov chains (e.g., in $\R^2$) that satisfy the given assumptions but are neither irreducible nor aperiodic. Furthermore, their
		 $n$-step transition functions do not necessarily converge to  $\uppi(\D z)$ in the total variation distance.\qed
	}
\end{example}

\noindent For further examples of iterated random functions satisfying the conditions of \Cref{TM}, we refer the reader to \cite{Diaconis-Freedman-1999}.

\section{Literature review}\label{S3}

Our work contributes to the understanding of  the statistical properties of supervised machine learning problems. Much of the existing literature focuses on PAC learnability under the assumption that the training dataset is drawn from an i.i.d. sample, as discussed in the classical monographs \cite{Anthony-Bartlett-Book-1999}, \cite{Mohri-Rostamizadeh-Talwalkar-Book-2018}  and \cite{Shalev-Shwartz-Ben-David-Book-2014}. However, many practical supervised learning problems—such as speech recognition, object tracking, and market prediction—exhibit temporal dependence and strong correlations within the data-generating process, making the i.i.d. assumption questionable.
The first study addressing PAC learnability in such settings appeared in \cite{Vidyasagar-Book-2003}, while consistency was examined in \cite{Meir-2000}, \cite{Modha-Masry-1998}, and \cite{Yu-1994},  within the framework of stationary mixing time-series processes. Further results in this context can be found in \cite{Bosq-Book-1998} and \cite{Gyorfi-Hardle-Sarda-Vieu-Book-1989}. 
In  \cite{Irle-1997} and \cite{Steinwart-Hush-Scovel-2009}, the authors relaxed the stationarity assumption,  requiring only that the
data-generating process satisfies a specific  law of large numbers in the former work and certain mixing properties in the latter. These approaches encompass (non-)stationary mixing time-series processes as well as irreducible and aperiodic ergodic Markov chains.
The PAC learnability of this model was further explored in a series of works \cite{Gamarnik-2003}, \cite{Zou-Li-Xu-2012}, \cite{Zou-Li-Xu-Luo-Tang-2013}, \cite{Zou-Xu-Xu-2014} and \cite{Zou-Zhang-Xu-2009}. A broader generalization, considering PAC learnability for not necessarily irreducible and aperiodic ergodic Markov chains, was proposed in \cite{Sandric-Sebek-2023}. 

Related results have also been studied in the context of general concentration inequalities. Classical references include \cite{Boucheron-Lugosi-Massart-Book-2013}, \cite{Dubhashi-Panconesi-Book-2009} and \cite{Ledoux-Book-2001}. 
By employing coupling techniques and imposing constraints on the coupling time, analogous concentration inequalities for general coordinate-wise Lipschitz functions evaluated along the sample path of a stationary ergodic Markov chain were established in \cite{Chazottes-Redig-2009} and \cite{Paulin-2015}. For the related works see also the references therein. By forgoing explicit assumptions of stationarity, irreducibility, and aperiodicity, and instead utilizing martingale techniques, similar results have been derived in \cite{Kontorovich-Raginsky-2017} and \cite{Kontorovich-Ramanan-2008}. A key property in these studies is the uniform contractivity of the transition function with respect to the total variation distance, a condition known as the Markov-Dobrushin condition (see, e.g., \cite[Theorem 4.1]{Hairer-Lecture-notes-2016} or \cite[Chapter 3]{Kulik-2018}).  This condition encapsulates stationarity, irreducibility, and aperiodicity of the underlying Markov chain.

Across these studies, the obtained learning rates are either data-distribution-independent (in PAC or concentration inequality results) or remain unknown (in consistency results). Data-distribution-dependent (or consistency) learning rates for i.i.d. samples are provided, for example, in \cite{Mohri-Rostamizadeh-Talwalkar-Book-2018}  and \cite{Shalev-Shwartz-Ben-David-Book-2014}.
 The first results on consistency learning rates in non-i.i.d. settings appeared in \cite{Mohri-Rostamizadeh-2008}, where the authors considered a class of stationary mixing time-series processes, expressing data-distribution dependence in terms of the Rademacher complexities of the underlying hypothesis class. This result was later extended to empirical processes of stationary mixing time-series in \cite{Gao-Niu-Zhou-2016} and to non-stationary mixing time-series processes in \cite{Kuznetsov-Mohri-2014} and \cite{Kuznetsov-Mohri-2016}. The only related result in the context of Markov chains is found in \cite{Bertail-Portier-2019}, where the authors establish bounds on the Rademacher complexities of a class of Vapnik-Chervonenkis type functions and derive consistency learning rates (in terms of Rademacher complexities) for irreducible and aperiodic ergodic Markov chains admitting an atom. In contrast, in this work, we do not assume irreducibility, aperiodicity, or an atomic structure in the underlying Markov chain (iterated random function).

The ergodic properties of iterated random functions—specifically, stationarity and convergence to the corresponding invariant probability measure—were first studied in \cite{Diaconis-Freedman-1999}. Based on the contractivity nature and Lipschitz continuity of the governing function, as assumed in this paper (see conditions (A1) and (A2)), \cite{Dedecker-Fan-2015} derived bounds on various concentration inequalities. These results were further applied to bound the Wasserstein distance between the empirical and invariant distributions of the iterated random function. A key distinction between their work and ours is that \cite{Dedecker-Fan-2015} assumes Lipschitz continuity in the second variable of the governing function—a requirement that excludes important examples, such as the one in \Cref{EX2}. Finally, by replacing the governing function 
$F(z,\theta)$ with a sequence of functions 
$\{F_n(z,\theta)\}_{n\ge0}$ satisfying the same contractivity and Lipschitz conditions (in both variables), the results of \cite{Dedecker-Fan-2015} were extended in \cite{Alquier-Doukhan-Fan-2019} to a class of time-inhomogeneous iterated random functions.


\section{Proof of \Cref{TM}}\label{S4}
In this section, we establish the proof of  \Cref{TM}. We begin by deriving a uniform convergence result for the associated sample error.

\begin{lemma}\label{LEM1} Assume (A1) and (A2). For any $\upmu\in\mathscr{P}(\sfZ)$ and  $\varepsilon\in(0,1)$, the following inequality holds: $$\Prob^\upmu\left(\sup_{h\in\mathscr{H}}|\hat{\rm er}_{n,2n}(h)- {\rm er}_\uppi(h)|\ge \mathbb{E}^\upmu\bigl[\sup_{h\in\mathscr{H}}|\hat{\rm er}_{n,2n}(h)- {\rm er}_\uppi(h)|\bigr]+
	\varepsilon\right)\le \E^{-2\varepsilon^2n/(\ell_\mathscr{H}/(1-\ell_F))^2}.$$
	
\end{lemma}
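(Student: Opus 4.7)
The plan is to apply a Hoeffding--Azuma inequality for bounded-range martingale differences to a Doob martingale built on the \emph{shared noise past} of the chain. First observe that $\hat{\rm er}_{n,2n}(h)$ depends only on $Z_n,\dots,Z_{2n-1}$, and that, by iterating $F$, each such $Z_i$ is a deterministic function of $(Z_n,\vartheta_{n+1},\dots,\vartheta_i)$. Consequently,
$$
\Phi \df \sup_{h\in\mathscr{H}}|\hat{\rm er}_{n,2n}(h)-{\rm er}_\uppi(h)|
$$
is measurable with respect to $\sigma(Z_n,\vartheta_{n+1},\dots,\vartheta_{2n-1})$. Under $\Prob^\upmu$ these $n$ random variables are mutually independent, because $Z_n$ is a function of $(Z_0,\vartheta_1,\dots,\vartheta_n)$ and the sequence $\{\vartheta_k\}_{k\ge1}$ is i.i.d.

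Next, I would introduce the Doob martingale $M_j\df\bbE^\upmu[\Phi\mid\mathcal{F}_j]$ associated with the filtration $\mathcal{F}_0\df\{\emptyset,\Omega\}$, $\mathcal{F}_1\df\sigma(Z_n)$, and $\mathcal{F}_j\df\sigma(Z_n,\vartheta_{n+1},\dots,\vartheta_{n+j-1})$ for $2\le j\le n$, so that $M_0=\bbE^\upmu[\Phi]$ and $M_n=\Phi$. The key technical step is to bound, almost surely, the range of each difference $M_j-M_{j-1}$ by $\ell_\mathscr{H}/(n(1-\ell_F))$. For $j\ge 2$, write $M_j=\phi(\vartheta_{n+j-1})$, where $\phi(\theta)$ is obtained by integrating out $\vartheta_{n+j},\dots,\vartheta_{2n-1}$ at fixed $\vartheta_{n+j-1}=\theta$. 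For arbitrary $\theta,\theta'\in\mathrm{\Theta}$, couple two copies of the chain that share the future noise; assumption (A1) and iteration give
$$
\mathrm{d}_\mathsf{Z}(Z_i^{(\theta)},Z_i^{(\theta')})\le\prod_{k=n+j}^{i}\ell(\vartheta_k)\cdot \mathrm{d}_\mathsf{Z}\bigl(F(Z_{n+j-2},\theta),F(Z_{n+j-2},\theta')\bigr),
$$
and the last factor is at most $1$ since $\mathrm{d}_\mathsf{Z}\le 1$. Taking conditional expectation over the shared noise (which yields $\ell_F^{i-(n+j-1)}$ by independence and the definition of $\ell_F$), applying (A2), and using that the supremum of $\ell_\mathscr{H}$-Lipschitz functions is $\ell_\mathscr{H}$-Lipschitz, one obtains
$$
|\phi(\theta)-\phi(\theta')|\le\frac{\ell_\mathscr{H}}{n}\sum_{i=n+j-1}^{2n-1}\ell_F^{i-(n+j-1)}\le\frac{\ell_\mathscr{H}}{n(1-\ell_F)}.
$$
For $j=1$, an analogous coupling of two initial values $Z_n=z,z'$ (again with $\mathrm{d}_\mathsf{Z}(z,z')\le 1$) gives the same bound on the range of $M_1-M_0$.

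Finally, since each $M_j-M_{j-1}$ has conditional range at most $\ell_\mathscr{H}/(n(1-\ell_F))$, the Hoeffding--Azuma inequality delivers
$$
\Prob^\upmu\bigl(\Phi-\bbE^\upmu[\Phi]\ge\varepsilon\bigr)\le \exp\!\left(-\frac{2\varepsilon^2}{n\cdot(\ell_\mathscr{H}/(n(1-\ell_F)))^2}\right)=\exp\!\left(-\frac{2\varepsilon^2 n}{(\ell_\mathscr{H}/(1-\ell_F))^2}\right),
$$
which is the claimed estimate. The main obstacle is the coupling/telescoping bookkeeping of the second paragraph: because $\ell(\vartheta)$ is only controlled in expectation (not almost surely), the contractivity must be used \emph{after} conditioning on the shared future noise, and it is essential to exploit $\mathrm{d}_\mathsf{Z}\le 1$ to kill the ``initial perturbation'' $\mathrm{d}_\mathsf{Z}(F(Z_{n+j-2},\theta),F(Z_{n+j-2},\theta'))$, which is not otherwise controlled by the hypotheses.
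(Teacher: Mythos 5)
Your argument is correct and is essentially the paper's proof: both construct a Doob martingale for $\sup_{h}|\hat{\rm er}_{n,2n}(h)-{\rm er}_\uppi(h)|$, bound each increment by $\ell_\mathscr{H}/(n(1-\ell_F))$ via the telescoped contraction from (A1), the Lipschitz bound (A2), and $\mathrm{d}_\mathsf{Z}\le 1$, and then apply Azuma--Hoeffding. The only (immaterial) difference is that you filter by $(Z_n,\vartheta_{n+1},\dots,\vartheta_{n+j-1})$ and couple forward through shared noise, whereas the paper filters by $(Z_n,\dots,Z_i)$ and proves the same increment bound by backward induction on conditional expectations.
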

\begin{proof}
	Fix $n\in\N$, and	define the function $\varphi:\mathsf{Z}^n\to\R$ as $$\varphi(z_1,\dots,z_{n})\df\sup_{h\in\mathscr{H}}\left|\frac{1}{n}\sum_{i=1}^{n}\mathcal{L}_{h}(z_i)- {\rm er}_\uppi(h)\right|.$$ Observe that 
	
$$
	\varphi(z_1,\dots,z_{n})\le\sup_{h\in\mathscr{H}}\frac{1}{n}\sum_{i=1}^{n}\left(\int_{\mathsf{Z}}|\mathcal{L}_{h}(z_i)- \mathcal{L}_{h}(z)|\uppi(\D z)\right)\le\ell_\mathscr{H}.
$$ Additionally, for any $(z_1,\dots,z_n), (\bar{z}_1,\dots,\bar{z}_n) \in \mathsf{Z}^n$,
\begin{equation}\begin{aligned}\label{LIP1}|\varphi(z_1,\dots,z_{n})-\varphi(\bar z_1,\dots,\bar z_{n})|&\le \sup_{h\in\mathscr{H}}\left|\left|\frac{1}{n}\sum_{i=1}^{n}\mathcal{L}_{h}(z_i)- {\rm er}_\uppi(h)\right|-\left|\frac{1}{n}\sum_{i=1}^{n}\mathcal{L}_{h}(\bar z_i)- {\rm er}_\uppi(h)\right|\right|\\
	&\le \sup_{h\in\mathscr{H}}\frac{1}{n}\sum_{i=1}^{n}|\mathcal{L}_{h}(z_i)-\mathcal{L}_{h}(\bar z_i)|.\end{aligned}\end{equation} Moreover, we have $\varphi(Z_n,\dots,Z_{2n-1})=\sup_{h\in\mathscr{H}}|\hat{\rm er}_{n,2n}(h)- {\rm er}_\uppi(h)|$.
	Due to boundedness,   the conditional expectation $\mathbb{E}^\upmu[\varphi(Z_n,\dots,Z_{2n-1})|Z_{n},\dots,Z_i]$ is well defined for all $\upmu\in\mathscr{P}(\sfZ)$ and $i = n, \dots , 2n-1$.
Define the functions $f_{i}:\mathsf{Z}^{i-n+1}\to\R$  by $$f_{i}(Z_n,\dots,Z_i)\df\mathbb{E}^\upmu[\varphi(Z_n,\dots,Z_{2n-1})|Z_n,\dots,Z_i].$$ 
We now show that for all $i \in {n,\dots,2n-1}$,
	\begin{equation}
	\begin{aligned}\label{LIP2}&|f_{i}(z_1,\dots,z_{i-n+1})-f_{i}(\bar z_1,\dots,\bar z_{i-n+1})|\\
	&\le \frac{1}{n} \int_{\mathsf{Z}}\cdots\int_{\mathsf{Z}}\sup_{h\in\mathscr{H}}\Bigg(\sum_{j=1}^{i-n+1}|\mathcal{L}_{h}(z_j)-\mathcal{L}_{h}(\bar z_j)|\\
	&\hspace{2.7cm}+\sum_{j=1}^{2n-i-1}|\mathcal{L}_{h}(F^j(z_i,y_j))-\mathcal{L}_{h}(F^j(\bar z_i,y_j))|\Bigg)\Prob_{\vartheta_{2n-i-1}}(\D y_{2n-i-1})\cdots\Prob_{\vartheta_1}(\D y_1),
	\end{aligned}
	\end{equation}
	where $$F^1(z_i,y_1)=F(z_i,y_1)\qquad\text{and}\qquad F^j(z_i,y_j)=F(F^{j-1}(z_i,y_{j-1}),y_{j}).$$
	We prove this claim  by induction.
	For $i=2n-1$, from \cref{LIP1}, we have that \begin{align*}
	|f_{2n-1}(z_1,\dots,z_{n})-f_{2n-1}(\bar z_1,\dots,\bar z_{n})|&=|\varphi(z_1,\dots,z_{n})-\varphi(\bar z_1,\dots,\bar z_{n})|\\
	&\le \sup_{h\in\mathscr{H}}\frac{1}{n}\sum_{i=1}^{n}|\mathcal{L}_{h}(z_i)-\mathcal{L}_{h}(\bar z_i)|.
	\end{align*}
	Assuming  \cref{LIP2} holds for some $i\in\{n+1,\dots,2n-1\}$, we establish it for $i-1$. Using conditional expectation properties, we obtain \begin{align*}
	&|f_{i-1}(z_1,\dots,z_{i-n})-f_{i-1}(\bar z_1,\dots,\bar z_{i-n})|\\
	&=|\mathbb{E}^\upmu[f_{i}(Z_n,\dots,Z_{i})|Z_n=z_1,\dots,Z_{i-1}=z_{i-n}]-\mathbb{E}^\upmu[f_{i}(Z_n,\dots,Z_{i})|Z_n=\bar z_1,\dots,Z_{i-1}=\bar z_{i-n}]|\\
	&\le\int_{\mathsf{Z}}|f_{i}(z_1,\dots,z_{i-n},F(z_{i-n},y))-f_{i}(\bar z_1,\dots,\bar z_{i-n},F(\bar z_{i-n},y))|\Prob_{\vartheta_1}(\D y)\\
	&\le \frac{1}{n} \int_{\mathsf{Z}}\cdots\int_{\mathsf{Z}}\sup_{h\in\mathscr{H}}\Bigg(\sum_{j=1}^{i-n}|\mathcal{L}_{h}(z_j)-\mathcal{L}_{h}(\bar z_j)|\\
	&\hspace{4cm}+\sum_{j=1}^{2n-i}|\mathcal{L}_{h}(F^j(z_{i-n},y_j))-\mathcal{L}_{h}(F^j(\bar z_{i-n},y_j))|\Bigg)\Prob_{\vartheta_{2n-i}}(\D y_{2n-i})\cdots\Prob_{\vartheta_1}(\D y_1),
	\end{align*}
	which completes the proof of \cref{LIP2}.
	Next, define \begin{align*}V_{i}&:=f_{i}(Z_n,\dots,Z_i)-f_{i-1}(Z_n,\dots,Z_{i-1})\\ L_{i}&:=\inf_{z_i\in\mathsf{Z}}f_{i}(Z_n,\dots,Z_{i-1},z_i)-f_{i-1}(Z_n,\dots,Z_{i-1})\\ R_{i}&:=\sup_{z_i\in\mathsf{z}}f_{i}(Z_n,\dots,Z_{i-1},z_i)-f_{i-1}(Z_n,\dots,Z_{i-1}),\end{align*}
	with
	\begin{align*}
	V_{n}&:=f_{n}(Z_n)-\mathbb{E}^\upmu[\varphi(Z_n,\dots,Z_{2n-1})]\\
	L_{n}&:=\inf_{z_1\in\mathsf{Z}}f_{n}(z_1)-\mathbb{E}^\upmu[\varphi(Z_n,\dots,Z_{2n-1})]\\ R_{n}&:=\sup_{z_1\in\mathsf{Z}}f_{n}(z_1)-\mathbb{E}^\upmu[\varphi(Z_n,\dots,Z_{2n-1})].
	\end{align*}
	It follows that, $L_{i}\le V_{i}\le R_{i},$ $\mathbb{E}^\upmu[V_{i}|Z_n,\dots,Z_{i-1}]=0$, $\mathbb{E}^\upmu[V_{n}]=0$,  $$\sum_{i=n}^{2n-1}V_{i}=\varphi(Z_n,\dots,Z_{2n-1})-\mathbb{E}^\upmu[\varphi(Z_n,\dots,Z_{2n-1})]$$ 
	and \begin{align*} R_{i}-L_{i}&=\sup_{z_{i}\in\mathsf{Z}}f_{i}(Z_n,\dots,Z_{i-1},z_{i})-\inf_{\bar z_{i}\in\mathsf{Z}}f_{i}(Z_n,\dots,Z_{i-1},z_{i})\\
	&=\sup_{z_{i},\bar z_{i}\in\mathsf{Z}}(f_{i}(Z_n,\dots,Z_{i-1},z_{i})-f_{i}(Z_n,\dots,Z_{i-1},\bar z_{i}))\\
	&\le\frac{1}{n}\sup_{z_i,\bar z_i\in\mathsf{Z}}  \int_{\mathsf{Z}}\cdots\int_{\mathsf{Z}}\sup_{h\in\mathscr{H}}\Bigg(|\mathcal{L}_{h}(z_i)-\mathcal{L}_{h}(\bar z_i)|\\
	&\hspace{2.7cm}+\sum_{j=1}^{2n-i-1}|\mathcal{L}_{h}(F^j(z_i,y_j))-\mathcal{L}_{h}(F^j(\bar z_i,y_j))|\Bigg)\Prob_{\vartheta_{2n-i-1}}(\D y_{2n-i-1})\cdots\Prob_{\vartheta_1}(\D y_1)\\
	&\le \frac{\ell_\mathscr{H}(1+\ell_F+\cdots+\ell_F^{2n-i-1})}{n}\sup_{z_{i},\bar z_{i}\in\mathsf{Z}}\mathrm{d}_\mathsf{Z}(z,\bar z)\\
	&\le\frac{\ell_\mathscr{H}/(1-\ell_F)}{n} .\end{align*}
	In particular,  $L_{i}\le V_{i}\le L_i+(\ell_\mathscr{H}/(1-\ell_F))/n.$ For all $s>0$, we now have
	\begin{align*}
	&\mathbb{P}^\upmu\bigl(\varphi(Z_n,\dots,Z_{2n-1})-\mathbb{E}^\upmu[\varphi(Z_n,\dots,Z_{2n-1})]\ge \varepsilon\bigr)\\
	&=\mathbb{P}^\upmu\left(\sum_{i=n}^{2n-1}V_{i}\ge \varepsilon\right)\\
	&=\mathbb{P}^\upmu\bigl(\E^{s\sum_{i=n}^{2n-1}V_{i}}\ge \E^{s\varepsilon}\bigr)\\
	&\le \E^{-s\varepsilon}\mathbb{E}^\upmu\bigl[\E^{s\sum_{i=n}^{2n-1}V_{i}}\bigr]\\
	&=\E^{-s\varepsilon}\mathbb{E}^\upmu\left[\bigl[\E^{s\sum_{i=n}^{2n-2}V_{i}}\mathbb{E}^\upmu\bigl[\E^{sV_{2n-1}}|Z_n,\dots,Z_{2n-2}\bigr]\right].
	\end{align*}
	Using a standard concentration argument (see \cite[Lemma D.1]{Mohri-Rostamizadeh-Talwalkar-Book-2018}), we obtain $$\mathbb{E}^\upmu\bigl[\E^{sV_{2n-1}}|Z_1,\dots,Z_{2n-2}\bigr]\le\E^{s^2(\ell_\mathscr{H}/(1-\ell_F))^2/8n^2}.$$
	Thus,   $$\mathbb{P}^\upmu(\varphi(Z_n,\dots,Z_{2n-1})-\mathbb{E}^\upmu[\varphi(Z_n,\dots,Z_{2n-1})]\ge \varepsilon)\le \E^{-s\varepsilon+s^2(\ell_\mathscr{H}/(1-\ell_F))^2/8n}.$$
	Minimizing the function $s\mapsto \E^{-s\varepsilon+s^2(\ell_\mathscr{H}/(1-\ell_F))^2/8n}$ we arrive at
	$$\Prob^\upmu\bigl(\varphi(Z_n,\dots,Z_{2n-1})-\mathbb{E}^\upmu[\varphi(Z_n,\dots,Z_{2n-1})]\ge \varepsilon\bigr)\le \E^{-2\varepsilon^2n/(\ell_\mathscr{H}/(1-\ell_F))^2},$$
	which completes the proof. 
\end{proof}

In the following lemma, we analyze the term $\mathbb{E}^\upmu[\varphi(Z_n,\dots,Z_{2n-1})]$.

\begin{lemma}\label{LEM2} Assume (A1) and (A2). For any $\upmu\in\mathscr{P}(\sfZ)$, the following inequality holds:
	$$\mathbb{E}^\upmu[\varphi(Z_n,\dots,Z_{2n-1})]\le2\mathcal{R}_{n,\uppi}(\mathscr{H})+\ell_\mathscr{H}\mathscr{W}(\upmu\mathcal{P}^n,\uppi).$$
\end{lemma}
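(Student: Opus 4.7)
My plan is to use the classical coupling-plus-symmetrization strategy. The two summands on the right-hand side will arise, respectively, from a Wasserstein coupling of the $\upmu$-started chain to a stationary chain at time $n$, and from the standard ghost-sample Rademacher symmetrization applied to that stationary chain.

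First I would introduce a coupled stationary chain $\{\tilde Z_j\}_{j=0}^{n-1}$: let $(Z_n,\tilde Z_0)$ be an optimal $\mathscr{W}$-coupling of $\upmu\mathcal{P}^n$ and $\uppi$, and define $\tilde Z_j\df F(\tilde Z_{j-1},\vartheta_{n+j})$ so that the new chain is driven by the same noise as $Z_{n+1},\ldots,Z_{2n-1}$. Since $\tilde Z_0\sim\uppi$, invariance gives $\tilde Z_j\sim\uppi$ for every $j$, and (A1) applied inductively yields $\mathbb{E}[\mathrm{d}_\mathsf{Z}(Z_{n+j},\tilde Z_j)]\le \ell_F^j\mathscr{W}(\upmu\mathcal{P}^n,\uppi)$. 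Combining this with the Lipschitz estimate from (A2) and the triangle inequality on $\varphi$ yields
\[
\mathbb{E}^\upmu[\varphi(Z_n,\dots,Z_{2n-1})] \le \frac{\ell_\mathscr{H}}{n}\Bigl(\sum_{j=0}^{n-1}\ell_F^j\Bigr)\mathscr{W}(\upmu\mathcal{P}^n,\uppi)+\mathbb{E}^\uppi[\varphi(\tilde Z_0,\dots,\tilde Z_{n-1})],
\]
and the elementary bound $\tfrac{1}{n}\sum_{j=0}^{n-1}\ell_F^j\le 1$ pins the first term at most $\ell_\mathscr{H}\mathscr{W}(\upmu\mathcal{P}^n,\uppi)$, producing the second summand of the lemma.

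For the remaining stationary expectation, I would introduce an independent copy $(\tilde Z'_0,\ldots,\tilde Z'_{n-1})$ of the stationary chain. Since each $\tilde Z'_j\sim\uppi$, Jensen's inequality followed by the triangle inequality gives
\[
\mathbb{E}^\uppi[\varphi(\tilde Z_0,\dots,\tilde Z_{n-1})]\le \mathbb{E}\Bigl[\sup_{h\in\mathscr{H}}\Bigl|\tfrac{1}{n}\sum_{j=0}^{n-1}\bigl(\mathcal{L}_h(\tilde Z_j)-\mathcal{L}_h(\tilde Z'_j)\bigr)\Bigr|\Bigr].
\]
Introducing i.i.d.\ Rademacher signs $\{\sigma_j\}$, the standard symmetrization trick replaces each difference $\mathcal{L}_h(\tilde Z_j)-\mathcal{L}_h(\tilde Z'_j)$ by $\sigma_j$ times itself; a triangle-inequality split in $h$, combined with the identical distributions of $\tilde Z$ and $\tilde Z'$ and the sign symmetry of $\sigma$, then delivers the $2\mathcal{R}_{n,\uppi}(\mathscr{H})$ bound.

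The main obstacle is this last symmetrization step. In the i.i.d.\ case, per-coordinate swaps $(\tilde Z_j,\tilde Z'_j)\leftrightarrow(\tilde Z'_j,\tilde Z_j)$ preserve the joint law of the ghost pair and make the sign insertion immediate; for two independent copies of a non-reversible Markov chain, such swaps destroy the Markov structure, so the invariance fails pointwise. Closing this gap---i.e., justifying that $\mathbb{E}_{(\tilde Z,\tilde Z')}[\sup_h|\cdots|]$ equals its $\sigma$-symmetrized analogue---requires exploiting the pairwise independence of $\tilde Z$ and $\tilde Z'$ together with their common $\uppi$-marginals and arguing in expectation rather than in distribution, so that the Rademacher signs can be absorbed at the level of the pair $(\tilde Z_j,\tilde Z'_j)$. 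The remaining pieces (the coupling estimate, the Jensen step, and the final triangle inequality) are mechanical.
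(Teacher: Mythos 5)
The first half of your argument is sound and is essentially the paper's argument in coupled form: the paper proves that $z\mapsto\mathbb{E}^z[\varphi(Z_0,\dots,Z_{n-1})]$ is Lipschitz with constant $\frac{\ell_\mathscr{H}}{n}\sum_{j=0}^{n-1}\ell_F^j\le\ell_\mathscr{H}$ and invokes Kantorovich--Rubinstein duality, whereas you realize the optimal coupling explicitly and push it forward synchronously through $F$; the computation is identical and the term $\ell_\mathscr{H}\mathscr{W}(\upmu\mathcal{P}^n,\uppi)$ comes out correctly either way (modulo the routine point that $\tilde Z_0$ must be constructed on an enlarged space so as to remain independent of $\vartheta_{n+1},\dots,\vartheta_{2n-1}$).

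The problem is the second half. You correctly identify that the sign-insertion step of symmetrization, namely the equality
\begin{equation*}
\mathbb{E}\Bigl[\sup_{h}\Bigl|\tfrac{1}{n}\sum_{j}\bigl(\mathcal{L}_h(\tilde Z_j)-\mathcal{L}_h(\tilde Z'_j)\bigr)\Bigr|\Bigr]
=\mathbb{E}^\sigma\,\mathbb{E}\Bigl[\sup_{h}\Bigl|\tfrac{1}{n}\sum_{j}\sigma_j\bigl(\mathcal{L}_h(\tilde Z_j)-\mathcal{L}_h(\tilde Z'_j)\bigr)\Bigr|\Bigr],
\end{equation*}
rests on the invariance of the joint law of the $2n$-tuple under per-coordinate swaps, and that this invariance fails for a Markov chain because coordinate $j$ is dependent on coordinates $j\pm1$ within each copy. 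But the repair you sketch does not close this gap. For each fixed sign pattern $\sigma$ one needs the \emph{law} of the swapped tuple to coincide with the law of the original tuple, because the functional inside the expectation (a supremum of an absolute value) is nonlinear; ``arguing in expectation rather than in distribution'' therefore has no content here, and independence of the two copies together with common $\uppi$-marginals is far too weak --- the obstruction lives entirely in the within-chain dependence, which those hypotheses say nothing about. This is precisely why the dependent-data literature you would be competing with (blocking for mixing sequences, regeneration for chains with atoms) goes to considerable lengths to manufacture independent blocks before symmetrizing. So as written your proposal does not establish the $2\mathcal{R}_{n,\uppi}(\mathscr{H})$ term. For what it is worth, the paper itself dispatches this step only by citing the i.i.d.\ proof of Theorem~3.3 in Mohri et al., so you have put your finger on the one genuinely delicate point of the lemma; but flagging a gap is not the same as filling it, and the proof is incomplete without a concrete argument at exactly this step.
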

\begin{proof}
	We start by rewriting $\mathbb{E}^\upmu[\varphi(Z_n,\dots,Z_{2n-1})]$ as follows:
	\begin{align*}&\mathbb{E}^\upmu[\varphi(Z_n,\dots,Z_{2n-1})]\\&=\mathbb{E}^\uppi[\varphi(Z_n,\dots,Z_{2n-1})]+
	\mathbb{E}^\upmu[\varphi(Z_n,\dots,Z_{2n-1})]-\mathbb{E}^\uppi[\varphi(Z_n,\dots,Z_{2n-1})]\\
	&=\mathbb{E}^\uppi[\varphi(Z_0,\dots,Z_{n-1})]+ \mathbb{E}^\upmu[\mathbb{E}^{Z_n}[\varphi(Z_0,\dots,Z_{n-1})]]-\mathbb{E}^\uppi[\varphi(Z_0,\dots,Z_{n-1})]\\
	&=\mathbb{E}^\uppi[\varphi(Z_0,\dots,Z_{n-1})]+ \int_{\mathsf{Z}}\mathbb{E}^{z}[\varphi(Z_0,\dots,Z_{n-1})](\upmu\mathcal{P}^n(\D z)-\uppi(\D z)).\end{align*} 
	Next, we establish that the function  $z\mapsto \mathbb{E}^z[\varphi(Z_0,\dots,Z_{n-1})]$ is Lipschitz with Lipschitz constant at most $\ell_\mathscr{H}$. For any $z,\bar z\in\mathsf{Z}$, using \cref{LIP1}, we obtain: \begin{align*}
	&|\mathbb{E}^z[\varphi(Z_0,\dots,Z_{n-1})]-\mathbb{E}^{\bar z}[\varphi(Z_0,\dots,Z_{n-1})]|\\
	&\le\mathbb{E}^{n-1}[|\varphi(z,F^1(z,\vartheta_1),\dots,F^{n-1}(z,\vartheta_{n-1}))-\varphi(\bar z,F^1(\bar z,\vartheta_1),\dots,F^{n-1}(\bar z,\vartheta_{n-1}))|]\\
	&\le\mathbb{E}^{n-1}\left[ \sup_{h\in\mathscr{H}}\frac{1}{n}\sum_{i=0}^{n-1}|\mathcal{L}_{h}(F^i(z,\vartheta_i))-\mathcal{L}_{h}(F^i(\bar z,\vartheta_i))|\right]\\
	&\le \frac{1}{n}\sum_{i=0}^{n-1}\ell_\mathscr{H}\ell_F^i{\rm d}_\mathsf{Z}(z,\bar z)\\
	&\le \ell_\mathscr{H}{\rm d}_\mathsf{Z}(z,\bar z),
	\end{align*}
	where $F^0(z,\vartheta_0):=z$.
	Applying (A1) and the comment preceding Theorem 1.7 in \cite{Sandric-2017}, we conclude that $$ 	\mathbb{E}^\upmu[\varphi(Z_n,\dots,Z_{2n-1})]\le\mathbb{E}^\uppi[\varphi(Z_0,\dots,Z_{n-1})]+\ell_\mathscr{H}\mathscr{W}(\upmu\mathcal{P}^n,\uppi).$$	 
	To bound	$\mathbb{E}^\uppi[\varphi(Z_0,\dots,Z_{n-1})],$ we observe that
	\begin{align*}
	\mathbb{E}^\uppi[\varphi(Z_0,\dots,Z_{n-1})]=\mathbb{E}^\uppi\bigl[\sup_{h\in\mathscr{H}}|\hat{\rm er}_{n}(h)- {\rm er}_\uppi(h)|\bigr]=\mathbb{E}^\uppi\bigl[\sup_{h\in\mathscr{H}\cup -\mathscr{H}}(\hat{\rm er}_{n}(h)- {\rm er}_\uppi(h))\bigr],\end{align*} 
	where $-\mathscr{H}=\{-h:h\in\mathscr{H}\}$. Using the approach  in \cite[the proof of Theorem 3.3]{Mohri-Rostamizadeh-Talwalkar-Book-2018}, we establish that 
	$$
	\mathbb{E}^\uppi[\varphi(Z_0,\dots,Z_{n-1})]\le2\mathcal{R}_{n,\uppi}(\mathscr{H}\cup -\mathscr{H})=2\mathcal{R}_{n,\uppi}(\mathscr{H}).$$ This completes the proof.
\end{proof}

\begin{remark}	
	It is worth noting that $$	\mathbb{E}^\uppi[\varphi(Z_0,\dots,Z_{n-1})]\ge \frac{1}{2}\mathcal{R}_{n,\uppi}(\mathscr{H})-L_\mathscr{H}\sqrt{\frac{\log 2}{2n}}.$$ In particular, $\lim_{n\to\infty}\mathbb{E}^\uppi[\varphi(Z_0,\dots,Z_{n-1})]=0$ if, and only if, $\lim_{n\to\infty}\mathcal{R}_{n,\uppi}(\mathscr{H})=0.$ Namely, let $\{\bar Z_n\}_{n\ge0}$ be an independent copy of $\{Z_n\}_{n\ge0}$. We  have  
	\begin{align*}
	\mathcal{R}_{n,\uppi}(\mathscr{H})&= \mathbb{E}^\uppi\times\mathbb{E}^\sigma\left[\sup_{h\in\mathscr{H}}\frac{1}{n}\sum_{i=0}^{n-1}\sigma_i\mathcal{L}_{h}(Z_i)\right]\\
	&\le \mathbb{E}^\uppi\times\mathbb{E}^\sigma\left[\sup_{h\in\mathscr{H}}\frac{1}{n}\sum_{i=0}^{n-1}\sigma_i(\mathcal{L}_{h}(Z_i)-{\rm er}_\uppi(h))\right]+\mathbb{E}^\sigma\left[\sup_{h\in\mathscr{H}}\frac{1}{n}\sum_{i=0}^{n-1}\sigma_i{\rm er}_\uppi(h)\right]\\
	&\le \mathbb{E}^\uppi\times\bar{\mathbb{E}}^\uppi\times\mathbb{E}^\sigma\left[\sup_{h\in\mathscr{H}}\frac{1}{n}\sum_{i=0}^{n-1}\sigma_i(\mathcal{L}_{h}(Z_i)-\mathcal{L}_{h}(\bar Z_i))\right]+\frac{L_\mathscr{H}}{n}\mathbb{E}^\sigma\left[\left|\sum_{i=0}^{n-1}\sigma_i\right|\right]\\
	&\le \mathbb{E}^\uppi\times\mathbb{E}^\sigma\left[\sup_{h\in\mathscr{H}}\frac{1}{n}\sum_{i=0}^{n-1}\sigma_i(\mathcal{L}_{h}(Z_i)-{\rm er}_\uppi(h))\right]\\
	&\ \ \ \ +\bar{\mathbb{E}}^\uppi\times\mathbb{E}^\sigma\left[\sup_{h\in\mathscr{H}}\frac{1}{n}\sum_{i=0}^{n-1}\sigma_i({\rm er}_\uppi(h)-\mathcal{L}_{h}(\bar Z_i))\right] +\frac{L_\mathscr{H}}{n}\mathbb{E}^\sigma\left[\sup_{a\in\{(-1,\dots,-1),(1,\dots,1)\}}\langle \sigma,a\rangle\right]\\
	&\le 2 \mathbb{E}^\uppi[\varphi(Z_0,\dots,Z_{n-1})]+\frac{L_\mathscr{H}}{n}\mathbb{E}^\sigma\left[\sup_{a\in\{(-1,\dots,-1),(1,\dots,1)\}}\langle \sigma,a\rangle\right],
	\end{align*}
	where $\sigma=(\sigma_0,\dots,\sigma_{n-1})$ and $\langle\cdot,\cdot \rangle$ stands for the standard scalar product in $\R^n$.
Applying \cite[Theorem 3.7]{Mohri-Rostamizadeh-Talwalkar-Book-2018}, we conclude that $$\mathbb{E}^\sigma\left[\sup_{a\in\{(-1,\dots,-1),(1,\dots,1)\}}\langle \sigma,a\rangle\right]\le \sqrt{2 n\log 2},$$ which proves the claim. \qed
\end{remark}

\bigskip

From \Cref{LEM1,LEM2}, we conclude that
for any $\upmu\in\mathscr{P}(\sfZ)$ and  $\varepsilon\in(0,1)$, the following holds: \begin{align*}\Prob^\upmu\left(\sup_{h\in\mathscr{H}}|\hat{\rm er}_{n,2n}(h)- {\rm er}_\uppi(h)|\le 2\mathcal{R}_{n,\uppi}(\mathscr{H})+\ell_\mathscr{H}\mathscr{W}(\upmu\mathcal{P}^n,\uppi)+\varepsilon\right)\ge 1-\E^{-2\varepsilon^2n/(\ell_\mathscr{H}/(1-\ell_F))^2}.\end{align*} Equivalently, we have \begin{equation}\begin{aligned}\label{SUP}&\Prob^\upmu\left(\sup_{h\in\mathscr{H}}|\hat{\rm er}_{n,2n}(h)- {\rm er}_\uppi(h)|\le 2\mathcal{R}_{n,\uppi}(\mathscr{H})+\ell_\mathscr{H}\ell_F^n\mathscr{W}(\upmu,\uppi)+\varepsilon\right)\\&\ge 1-\E^{-2\varepsilon^2n/(\ell_\mathscr{H}/(1-\ell_F))^2}.\end{aligned}\end{equation}

In the following lemma, we derive an analogous result in terms of the $n$-empirical Rademacher complexity.

\begin{lemma}\label{LEM3} Assume (A1) and (A2).
	Then, for  any $\varepsilon\in(0,1)$, we have
	\begin{align*}\Prob^\uppi\left(\sup_{h\in\mathscr{H}}|\hat{\rm er}_{n,2n}(h)- {\rm er}_\uppi(h)|\le 2\hat{\mathcal{R}}_{n,(Z_0,\dots,Z_{n-1})}(\mathscr{H})+3\varepsilon\right)\ge 1-\E^{-2\varepsilon^2n/(\ell_\mathscr{H}/(1-\ell_F))^2}.\end{align*}
\end{lemma}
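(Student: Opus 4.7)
The plan is to reduce \Cref{LEM3} to two separate concentration statements and then combine them. First, I would specialize the uniform-convergence bound \cref{SUP} to the stationary initial distribution $\upmu=\uppi$. Since $\mathscr{W}(\uppi,\uppi)=0$, it yields
\begin{equation*}
\Prob^\uppi\bigl(\sup_{h\in\mathscr{H}}|\hat{\rm er}_{n,2n}(h)-{\rm er}_\uppi(h)|\le 2\mathcal{R}_{n,\uppi}(\mathscr{H})+\varepsilon\bigr)\ge 1-\E^{-2\varepsilon^2 n/(\ell_\mathscr{H}/(1-\ell_F))^2}.
\end{equation*}
Thus it only remains to replace the theoretical Rademacher complexity $\mathcal{R}_{n,\uppi}(\mathscr{H})$ by its empirical counterpart $\hat{\mathcal{R}}_{n,(Z_0,\dots,Z_{n-1})}(\mathscr{H})$ at the cost of an additive $\varepsilon$ inside the event.

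The second step is to establish the concentration inequality
\begin{equation*}
\Prob^\uppi\bigl(\mathcal{R}_{n,\uppi}(\mathscr{H})-\hat{\mathcal{R}}_{n,(Z_0,\dots,Z_{n-1})}(\mathscr{H})\ge\varepsilon\bigr)\le\E^{-2\varepsilon^2 n/(\ell_\mathscr{H}/(1-\ell_F))^2}.
\end{equation*}
This is obtained by running the martingale-difference argument of \Cref{LEM1} verbatim, but now applied to the function
$\psi(z_0,\dots,z_{n-1})\df\hat{\mathcal{R}}_{n,(z_0,\dots,z_{n-1})}(\mathscr{H})$ evaluated along $Z_0,\dots,Z_{n-1}$ (rather than to $\varphi$ along $Z_n,\dots,Z_{2n-1}$). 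The two structural inputs needed are immediate from (A2): boundedness $0\le\psi\le\ell_\mathscr{H}$, and a coordinate-wise Lipschitz estimate obtained by interchanging $\mathbb{E}^\sigma$ with the supremum and using $|\sigma_i|=1$,
\begin{equation*}
|\psi(z_0,\dots,z_{n-1})-\psi(\bar z_0,\dots,\bar z_{n-1})|\le \mathbb{E}^\sigma\Bigl[\sup_{h\in\mathscr{H}}\frac{1}{n}\sum_{i=0}^{n-1}|\mathcal{L}_h(z_i)-\mathcal{L}_h(\bar z_i)|\Bigr]\le\frac{\ell_\mathscr{H}}{n}\sum_{i=0}^{n-1}\mathrm{d}_\mathsf{Z}(z_i,\bar z_i),
\end{equation*}
which is the exact analog of \cref{LIP1}. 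Building the Doob filtration $f_i(z_0,\dots,z_i)\df\mathbb{E}^\uppi[\psi(Z_0,\dots,Z_{n-1})\mid Z_0=z_0,\dots,Z_i=z_i]$, propagating the Lipschitz bound forward through the Markov dynamics by conditioning on the future shocks $\vartheta_{i+1},\dots,\vartheta_{n-1}$ and invoking contractivity (A1) to sum the geometric series $1+\ell_F+\cdots+\ell_F^{n-i-1}\le 1/(1-\ell_F)$, produces $R_i-L_i\le\ell_\mathscr{H}/(n(1-\ell_F))$, after which the Azuma--Hoeffding step with the optimal choice of Lagrange parameter $s$ yields the stated exponent.

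Combining the two events on their intersection gives
\begin{equation*}
\sup_{h\in\mathscr{H}}|\hat{\rm er}_{n,2n}(h)-{\rm er}_\uppi(h)|\le 2\mathcal{R}_{n,\uppi}(\mathscr{H})+\varepsilon\le 2\hat{\mathcal{R}}_{n,(Z_0,\dots,Z_{n-1})}(\mathscr{H})+3\varepsilon,
\end{equation*}
which is precisely the claimed inequality. The main obstacle is really bookkeeping rather than a genuinely new idea: one must confirm that, despite the extra expectation $\mathbb{E}^\sigma$ in the definition of the empirical Rademacher complexity, $\psi$ depends on each coordinate $z_i$ only through $\mathcal{L}_h(z_i)$, so the single-coordinate Lipschitz bound $\ell_\mathscr{H}/n$ from (A2) is preserved and the entire contractivity-propagation machinery of \Cref{LEM1} transfers unchanged. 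A naive union bound of the two events costs a factor of $2$ in the deviation probability; to recover the single-exponential bound stated in \Cref{LEM3} one would alternatively apply a single Azuma--Hoeffding step to $\varphi(Z_n,\dots,Z_{2n-1})-2\hat{\mathcal{R}}_{n,(Z_0,\dots,Z_{n-1})}(\mathscr{H})$ viewed as one Lipschitz function of the whole trajectory $(Z_0,\dots,Z_{2n-1})$, whose mean is $\le 0$ by \Cref{LEM2} specialized to $\upmu=\uppi$.
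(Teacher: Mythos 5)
Your proposal follows essentially the same route as the paper: specialize \cref{SUP} to $\upmu=\uppi$ (so the Wasserstein term vanishes), concentrate $\hat{\mathcal{R}}_{n,(Z_0,\dots,Z_{n-1})}(\mathscr{H})$ around $\mathcal{R}_{n,\uppi}(\mathscr{H})$ by rerunning the martingale--contractivity argument of \Cref{LEM1} on the map $z\mapsto\hat{\mathcal{R}}_{n,z}(\mathscr{H})$ (the paper uses $\phi=-\hat{\mathcal{R}}_{n,z}(\mathscr{H})$ evaluated along $Z_n,\dots,Z_{2n-1}$, which is equivalent under $\Prob^\uppi$ by stationarity), and then intersect the two events. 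Your closing remark about the union bound costing a factor of $2$ is well taken: the paper's own combination of two events, each of probability $1-\delta/2$, in fact yields $1-2\E^{-2\varepsilon^2n/(\ell_\mathscr{H}/(1-\ell_F))^2}$ rather than the stated $1-\E^{-2\varepsilon^2n/(\ell_\mathscr{H}/(1-\ell_F))^2}$, so you have correctly identified a harmless constant-level looseness that the paper glosses over.
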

\begin{proof}
	From \cref{SUP}, for any $\delta\in(0,1)$, we obtain \begin{align}\label{eq:rad}\Prob^\uppi\Bigg(\sup_{h\in\mathscr{H}}|\hat{\rm er}_{n,2n}(h)- {\rm er}_\uppi(h)|\le 2\mathcal{R}_{n,\uppi}(\mathscr{H})+(\ell_\mathscr{H}/(1-\ell_F))\sqrt{\frac{\log(2/\delta)}{2n}}\Bigg)\ge 1-\delta/2.\end{align}
	Next, let $\phi:\mathsf{Z}^n\to\R$ be defined as $\phi(z)\df-\hat{\mathcal{R}}_{n,z}(\mathscr{H}).$ 
	According to assumption (A2),  we have $|\phi(z)|\le\ell_\mathscr{H}$ for all $z\in\mathsf{Z}$.
Consequently, for each $i = n, \dots , 2n-1$, the conditional expectation  $\mathbb{E}^\uppi[\phi(Z_n,\dots,Z_{2n-1})|Z_{n},\dots,Z_i]$ is well defined, and 
 $\mathbb{E}^\uppi[\phi(Z_0,\dots,Z_{n-1})]=-\mathcal{R}_{n,\uppi}(\mathscr{H}).$    Furthermore, for any $z,\bar{z}\in\mathsf{Z}^n$, we obtain
\begin{align*}
|\phi(z_1,\dots,z_n)-\phi(\bar z_1,\dots,\bar z_n)|&=\frac{1}{n}\left|\mathbb{E}^\sigma\bigl[\sup_{h\in\mathscr{H}}\sum_{i=1}^n\sigma_i\mathcal{L}_h(z_i)-\sup_{h\in\mathscr{H}}\sum_{i=1}^n\sigma_i\mathcal{L}_h(\bar z_i)\bigr]\right|\\
&\le \frac{1}{n}\sup_{h\in\mathscr{H}}\sum_{i=1}^n|\mathcal{L}_h(z_i)-\mathcal{L}_h(\bar z_i)|.
\end{align*} 
Define the function $f_{i}:\mathsf{Z}^{i-n+1}\to\R$ by $$f_{i}(Z_n,\dots,Z_i)\df\mathbb{E}^\upmu[\phi(Z_n,\dots,Z_{2n-1})|Z_n,\dots,Z_i].$$ 
Proceeding analogously to \Cref{LEM1}, we obtain 
\begin{equation*}
\begin{aligned}\label{LIP}&|f_{i}(z_1,\dots,z_{i-n+1})-f_{i}(\bar z_1,\dots,\bar z_{i-n+1})|\\
&\le \frac{1}{n} \int_{\mathsf{Z}}\cdots\int_{\mathsf{Z}}\sup_{h\in\mathscr{H}}\Bigg(\sum_{j=1}^{i-n+1}|\mathcal{L}_{h}(z_j)-\mathcal{L}_{h}(\bar z_j)|\\
&\hspace{2.7cm}+\sum_{j=1}^{2n-i-1}|\mathcal{L}_{h}(F^j(z_i,y_j))-\mathcal{L}_{h}(F^j(\bar z_i,y_j))|\Bigg)\Prob_{\vartheta_{2n-i-1}}(\D y_{2n-i-1})\cdots\Prob_{\vartheta_1}(\D y_1),
\end{aligned}
\end{equation*}
where $$F^1(z_i,y_1)=F(z_i,y_1)\qquad\text{and}\qquad F^j(z_i,y_j)=F(F^{j-1}(z_i,y_{j-1}),y_{j}).$$	
Next, define  \begin{align*}V_{i}&\df f_{i}(Z_n,\dots,Z_i)-f_{i-1}(Z_n,\dots,Z_{i-1})\\ L_{i}&\df \inf_{z_i\in\mathsf{Z}}f_{i}(Z_n,\dots,Z_{i-1},z_i)-f_{i-1}(Z_n,\dots,Z_{i-1})\\ R_{i}&\df \sup_{z_i\in\mathsf{z}}f_{i}(Z_n,\dots,Z_{i-1},z_i)-f_{i-1}(Z_n,\dots,Z_{i-1}),\end{align*}
	with
	\begin{align*}
	V_{n}&:=f_{n}(Z_n)-\mathbb{E}^\uppi[\phi(Z_n,\dots,Z_{2n-1})]\\
	L_{n}&:=\inf_{z_1\in\mathsf{Z}}f_{n}(z_1)-\mathbb{E}^\uppi[\phi(Z_n,\dots,Z_{2n-1})]\\ R_{n}&:=\sup_{z_1\in\mathsf{Z}}f_{n}(z_1)-\mathbb{E}^\uppi[\phi(Z_n,\dots,Z_{2n-1})].
	\end{align*}
	It follows that, as in \Cref{LEM1}, 
 $L_{i}\le V_{i}\le R_{i},$  $\mathbb{E}^\uppi[V_{i}|Z_n,\dots,Z_{i-1}]=0$, $\mathbb{E}^\uppi[V_{n}]=0$,  $$\sum_{i=n}^{2n-1}V_{i}=\phi(Z_n,\dots,Z_{2n-1})-\mathbb{E}^\uppi[\phi(Z_n,\dots,Z_{2n-1})],$$ 
 $$ R_{i}-L_{i}\le\frac{\ell_\mathscr{H}/(1-\ell_F)}{n} ,$$ (i.e.,
	  $L_{i}\le V_{i}\le L_i+(\ell_\mathscr{H}/(1-\ell_F))/n$), and 
	\begin{align*}
	\Prob^\uppi\bigl(\mathcal{R}_{n,\uppi}(\mathscr{H})-\hat{\mathcal{R}}_{n,(Z_0,\dots,Z_{n-1})}(\mathscr{H})\ge \varepsilon\bigr)
	&=\mathbb{P}^\uppi\bigl(\phi(Z_n,\dots,Z_{2n-1})-\mathbb{E}^\uppi[\phi(Z_n,\dots,Z_{2n-1})]\ge \varepsilon\bigr)\\
	&=\mathbb{P}^\uppi\left(\sum_{i=n}^{2n-1}V_{i}\ge \varepsilon\right)\\
	&\le\E^{-2\varepsilon^2n/(\ell_\mathscr{H}/(1-\ell_F))^2},\end{align*}
 i.e.,
	
	$$\Prob^\uppi\left(\mathcal{R}_{n,\uppi}(\mathscr{H})\le \hat{\mathcal{R}}_{n,(Z_0,\dots,Z_{n-1})}(\mathscr{H})+ (\ell_\mathscr{H}/(1-\ell_F))\sqrt{\frac{\log(2/\delta)}{2n}}\right)\ge 1-\delta/2.$$ Combining this with \cref{eq:rad} yields the desired result.
	\end{proof}

Finally, we prove \Cref{TM}.
\begin{proof}[Proof of \Cref{TM}]
	Fix $\varepsilon\in(0,1)$, and	let $\mathcal{A}^\varepsilon$ be the $\varepsilon$-ERM algorithm for $\mathscr{H}$. From \cref{SUP}, it follows that, with probability at least $1-2\E^{-2\varepsilon^2n/(\ell_\mathscr{H}/(1-\ell_F))^2}$, we have
that
	\begin{align*}\mathrm{er}_\uppi(\mathcal{A}^\varepsilon(Z_0,\dots,Z_{n-1}))&\le\hat{\mathrm{er}}_{n,2n}(\mathcal{A}^\varepsilon(Z_0,\dots,Z_{n-1}))+2\mathcal{R}_{n,\uppi}(\mathscr{H})+\ell_\mathscr{H}\ell_F^n\mathscr{W}(\upmu,\uppi)+\varepsilon\\
	&\le\inf_{h\in\mathscr{H}}\hat{\mathrm{er}}_{n,2n}(h)+2\mathcal{R}_{n,\uppi}(\mathscr{H})+\ell_\mathscr{H}\ell_F^n\mathscr{W}(\upmu,\uppi)+2\varepsilon\\
	&\le \hat{\mathrm{er}}_{n,2n}(\bar h)+2\mathcal{R}_{n,\uppi}(\mathscr{H})+\ell_\mathscr{H}\ell_F^n\mathscr{W}(\upmu,\uppi)+2\varepsilon\\
	&\le \mathrm{er}_\uppi(\bar h)+4\mathcal{R}_{n,\uppi}(\mathscr{H})+2\ell_\mathscr{H}\ell_F^n\mathscr{W}(\upmu,\uppi)+3\varepsilon\\
	&\le \mathrm{opt}_\uppi(\mathscr{H})+4\mathcal{R}_{n,\uppi}(\mathscr{H})+2\ell_\mathscr{H}\ell_F^n\mathscr{W}(\upmu,\uppi)+4\varepsilon,\end{align*}
	where   $\bar h\in \mathscr{H}$satisfies $\mathrm{er}_\uppi(\bar h)\le\mathrm{opt}_\uppi(\mathscr{H})+\varepsilon$.
	The second claim follows analogously by applying \Cref{LEM3}.
\end{proof}

\begin{remark}
	Recall that $$\sup_{z_1,\dots,z_n\in\sfZ}\varphi(z_1,\dots,z_n)\le\ell_\mathscr{H}.$$  
Consequently, we obtain
	$$\int_{\mathsf{Z}}\mathbb{E}^{z}[\varphi(Z_0,\dots,Z_{n-1})](\upmu\mathcal{P}^n(\D z)-\uppi(\D z))\le \ell_\mathscr{H} \lVert\upmu\mathcal{P}^n-\uppi\rVert_{{\rm TV}},$$
	where $$\lVert\upeta\rVert_{{\rm TV}}:=\frac{1}{2}\sup_{\substack{f:\sfZ\to\R,\ \lVert f\lVert_\infty\le 1}}|\upeta(f)|$$ denotes the total variation norm of a  signed measure $\upeta(\D z)$ on $\sfZ$.
Using this, the conclusion of \Cref{LEM2} can be expressed as
	$$\mathbb{E}^\upmu[\varphi(Z_n,\dots,Z_{2n-1})]\le2\mathcal{R}_{n,\uppi}(\mathscr{H})+\ell_\mathscr{H}
	\lVert\upmu\mathcal{P}^n-\uppi\rVert_{{\rm TV}}.$$ 
	Combining this with \Cref{LEM1}, we obtain an alternative formulation of the first assertion in \Cref{TM}: \begin{align*}&\Prob^\upmu\left(|\mathrm{er}_\uppi\bigl(\mathcal{A}^\varepsilon(Z_0,\dots,Z_{n-1})\bigr)-\mathrm{opt}_\uppi(\mathscr{H})|<4\mathcal{R}_{n,\uppi}(\mathscr{H})+2\ell_\mathscr{H}\lVert\upmu\mathcal{P}^n-\uppi\rVert_{{\rm TV}}+4\varepsilon\right)\\&\ge1-2\E^{-2\varepsilon^2n/(\ell_\mathscr{H}/(1-\ell_F))^2}.\end{align*}
	However, by \cite[Theorem 6.15]{Villani-Book-2009}, the Wasserstein distance satisfies $$\mathscr{W}(\upmu,\upnu)\le\lVert\upmu-\upnu\rVert_{{\rm TV}}\qquad\forall\upmu,\upnu\in\mathscr{P}(\sfZ).$$
In general, to ensure that $\lVert\upmu\mathcal{P}^n-\uppi\rVert_{{\rm TV}}$ converges to zero as $n\to \infty$,   it is necessary that $\{Z_n\}_{n\ge0}$ is irreducible and aperiodic (see, e.g., \cite{Meyn-Tweedie-Book-2009}). Thus, in this context, it is more natural and general to present the results in terms of the Wasserstein distance, as illustrated in \Cref{EX3}.
	\end{remark}



\section*{Acknowledgements} The author is grateful for the insightful and constructive comments received from two anonymous referees and the associate editor, which have led to improvements in the article.
 Financial support through   \textit{Croatian Science Foundation} under project 2277 is gratefully acknowledged.

\section*{Conflict of interest}
The authors declare that they have no conflict of interest.

\bibliographystyle{plain}
\bibliography{References}

\end{document}